\documentclass{article}

\PassOptionsToPackage{numbers, sort, compress}{natbib}

\usepackage[preprint]{neurips_2026}
\usepackage[pagebackref,breaklinks,colorlinks]{hyperref}
\usepackage[utf8]{inputenc} 
\usepackage[T1]{fontenc}    
\usepackage{url}            
\usepackage{amsfonts}       
\usepackage{nicefrac}       
\usepackage{microtype}      
\usepackage{xcolor}         
\usepackage{caption}
\usepackage{subcaption}
\usepackage{amsmath,amssymb,amsthm,mathtools,bm}
\newtheorem{theorem}{Theorem}[section]
\newtheorem{lemma}{Lemma}[section]
\newtheorem{assumption}{Assumption}[section]
\newtheorem{definition}{Definition}[section]

\usepackage{amsmath,amsfonts,bm}

\def\eqref#1{equation~\ref{#1}}

\def\1{\bm{1}}

\def\mA{{\bm{A}}}

\def\mS{{\bm{S}}}

\def\mU{{\bm{U}}}
\def\mV{{\bm{V}}}

\DeclareMathAlphabet{\mathsfit}{\encodingdefault}{\sfdefault}{m}{sl}
\SetMathAlphabet{\mathsfit}{bold}{\encodingdefault}{\sfdefault}{bx}{n}

\usepackage{color}
\usepackage{marvosym}
\usepackage{enumitem}
\usepackage{colortbl}
\usepackage{arydshln}
\usepackage{adjustbox}
\usepackage{multirow}
\usepackage{makecell}
\usepackage{wrapfig}
\usepackage{booktabs}   
\usepackage{multirow}   
\usepackage{graphicx}   

\usepackage{tikz}

\newcommand{\bbR}{\mathbb{R}}
\newcommand{\cF}{\mathcal{F}}
\newcommand{\cQ}{\mathcal{Q}}
\newcommand{\cV}{\mathcal{V}}
\newcommand{\cT}{\mathcal{T}}
\newcommand{\bQ}{\mathbf{Q}}
\newcommand{\bV}{\mathbf{V}}
\newcommand{\bX}{\mathbf{X}}
\newcommand{\bH}{\mathbf{H}}
\newcommand{\bh}{\mathbf{h}}
\newcommand{\bU}{\mathbf{U}}
\newcommand{\bZ}{\mathbf{Z}}
\newcommand{\bq}{\mathbf{q}}
\newcommand{\bv}{\mathbf{v}}
\newcommand{\bz}{\mathbf{z}}
\newcommand{\ba}{\mathbf{a}}
\newcommand{\bW}{\mathbf{W}}
\newcommand{\bE}{\mathbf{E}}

\definecolor{rred}{RGB}{245, 152, 153}
\definecolor{oorange}{RGB}{253, 205, 154}
\definecolor{yyellow}{RGB}{255, 255, 153}
\definecolor{lightorange}{RGB}{251, 229, 214}

\title{Visual Enhanced Depth Scaling for Multimodal Latent Reasoning}

\author{
Yudong~Han\textsuperscript{1,2}\footnotemark[1], Yong Wang\textsuperscript{2}\footnotemark[1], Zaiquan Yang$^{3}$, Zhen Qu$^{4}$, Liyuan Pan\textsuperscript{1,5}\footnotemark[2],  Xiangxiang Chu$^{2}$ \\
$^{1}$Beijing Institute of Technology, $^{2}$AMAP, Alibaba Group  \\
$^{3}$City University of Hong Kong, \\
$^{4}$Institute of Automation, Chinese Academy of Sciences, \\
$^{5}$Yangtze Delta Region Academy of Beijing Institude of Technology, Jiaxing, China\\
\tt \href{https://github.com/Simon98-AI/Vedas}{https://github.com/Simon98-AI/Vedas}
}

\begin{document}
\maketitle

\begin{abstract}
Multimodal latent reasoning has emerged as a promising paradigm that replaces explicit Chain-of-Thought (CoT) decoding with implicit feature propagation, simultaneously enhancing representation informativeness and reducing inference latency. By analyzing token-level gradient dynamics during latent training, we reveal two critical observations: (1) visual tokens exhibit significantly smaller gradient norms than their textual counterparts due to inherent language bias, resulting in systematic visual under-optimization; and (2) semantically simple tokens converge rapidly, whereas complex tokens exhibit persistent gradient instability constrained by fixed architectural depths. To address these limitations, we propose a visual replay module and routing depth scaling to collaboratively enhance visual perception and refine complicated latents for deeper contextual reasoning. The former module leverages causal self-attention to estimate token saliency, reinforcing fine-grained grounding through spatially-coherent constraints. Complementarily, the latter mechanism adaptively allocates additional reasoning steps to complex tokens, enabling deeper contextual refinement. Guided by a curriculum strategy that progressively internalizes explicit CoT into compact latent representations, our framework achieves state-of-the-art performance across diverse benchmarks while delivering substantial inference speedups over explicit CoT baselines.

{
\renewcommand{\thefootnote}{\fnsymbol{footnote}}
\footnotetext[1]{\ Equal contribution. Work done when Yudong's internship at AMAP, Alibaba Group.}
\footnotetext[2]{\ Corresponding author.}
}

\end{abstract}

\section{Introduction}
\label{sec:intro}
Over the past few years, large language models (LLMs) have achieved remarkable progress in complex reasoning, propelled by scaling laws in data volume and model capacity~\cite{DBLP:journals/corr/abs-2001-08361, Shukor2025ScalingLF}. Advanced techniques such as Chain-of-Thought (CoT) prompting~\cite{zhang2023multimodal, shao2024visual} and reinforcement learning (RL)~\cite{deepseekr1, Yu2025DAPOAO, Gao2025SoftAP} for trajectory optimization have proven highly effective in text-only domains. Extending these capabilities to the multimodal realm has thus become a pivotal research direction. Current approaches primarily follow three paradigms. First, \textit{Text-based Reasoning}~\cite{zhang2023multimodal, MitraCCoT, lei2024scaffolding} generates explicit multi-step textual chains before producing an answer. However, these methods typically rely on static visual inputs, and recent studies~\cite{lookback2025, DBLP:journals/corr/abs-2502-17425} indicate that visual grounding deteriorates significantly over extended reasoning chains. Second, \textit{Tool-augmented Reasoning} manipulates visual inputs through external operations (e.g., zooming or region enhancement) and injects intermediate visual hints into the reasoning trace. While powerful, these approaches are prone to redundant or invalid tool invocations, introducing noise that degrades performance and substantially increases inference latency. Recently, emerging researches position \textit{Latent Reasoning} as a viable direction for optimizing multimodal reasoning. Unlike traditional methods that rely on explicit textual chains, latent reasoning encodes intermediate reasoning steps into compact continuous vectors. This paradigm offers compelling advantages, including higher inference efficiency, reduced annotation overhead, and the ability to learn dense, high-fidelity multimodal representations~\cite{zhang2023multimodal, DBLP:journals/corr/abs-2505-16552}.

\begin{figure*}[]  
    \centering
    \begin{subfigure}[b]{0.46\linewidth}
        \centering
        \includegraphics[width=\linewidth, height=3.5cm]{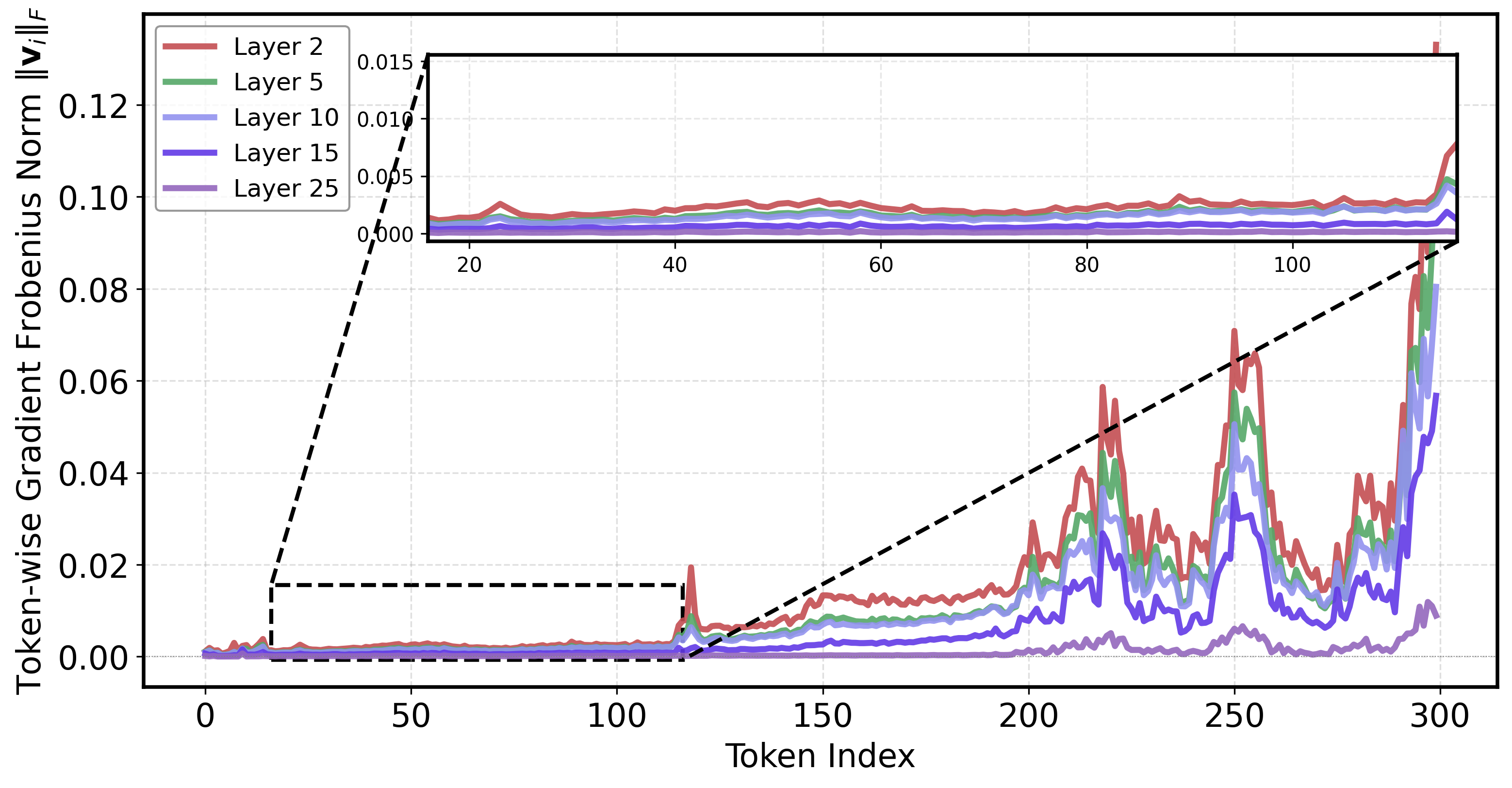}
        \caption{Token Frobenius Norm in Different Epochs}
        \label{fig:v_vs_t}
    \end{subfigure}
    \hfill
    \begin{subfigure}[b]{0.26\linewidth}  
        \centering
        \includegraphics[width=\linewidth, height=3.5cm]{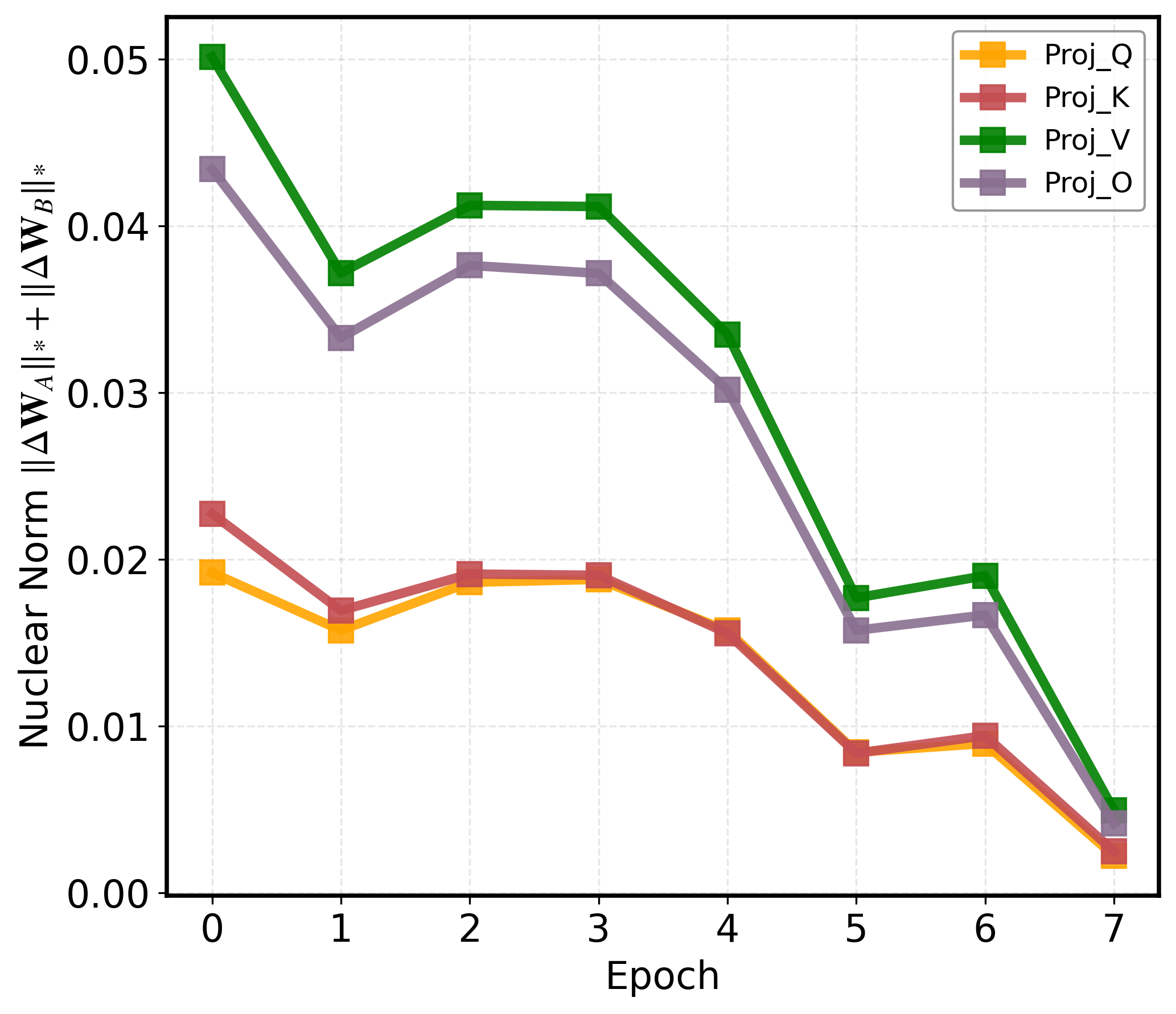}
        \caption{Easy Samples}
        \label{fig:easy}
    \end{subfigure}
    \hfill            
    \begin{subfigure}[b]{0.26\linewidth}
        \centering
        \includegraphics[width=\linewidth, height=3.5cm]{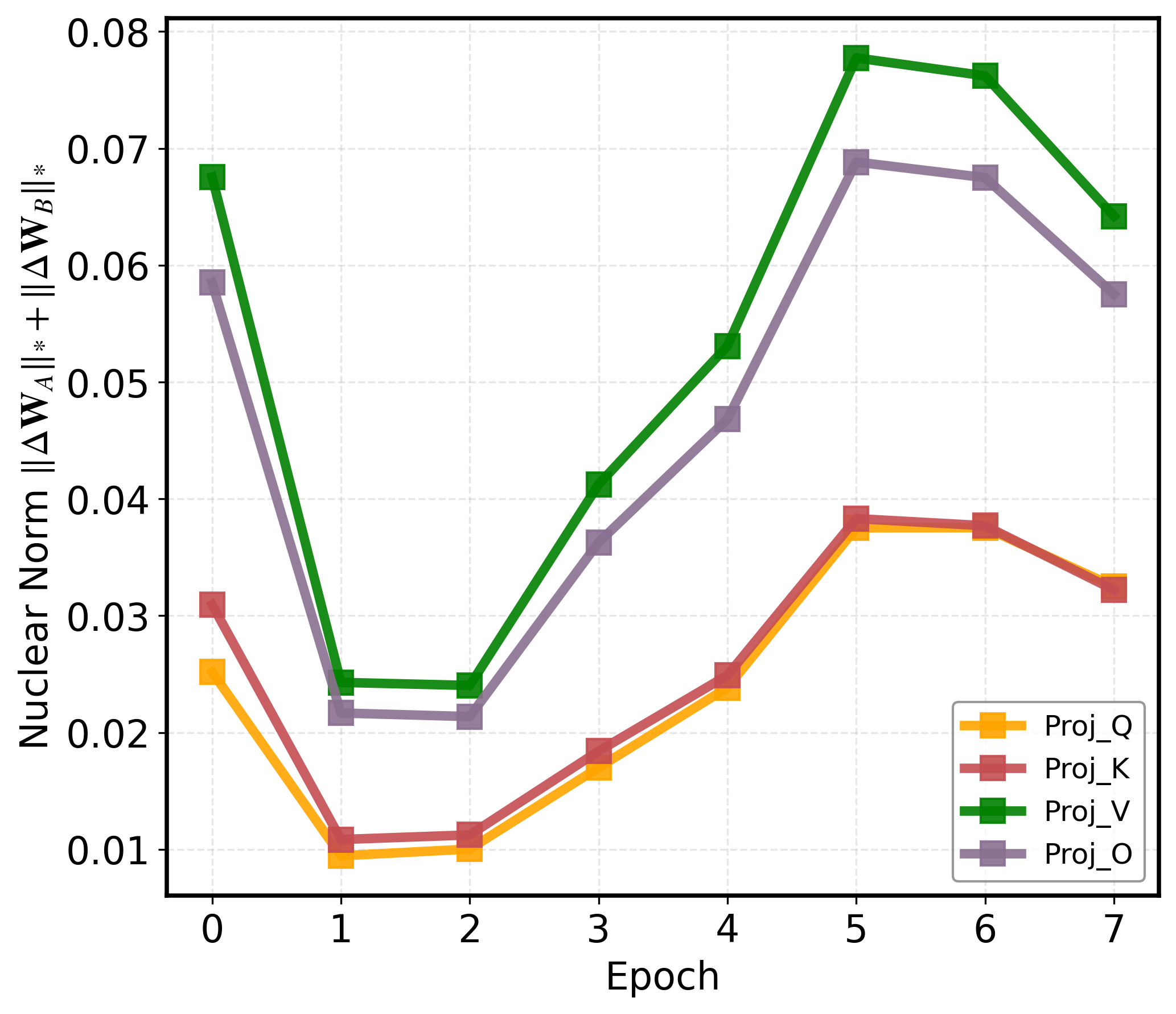}
        \caption{Hard Samples}
        \label{fig:hard}
    \end{subfigure}%
    \caption{Panel (a) depicts the token-wise Frobenius norm of gradients across different layers throughout the entire training process, revealing a clear gradient imbalance between visual tokens and text tokens/latent token. Panels (b) and (c) further reveal distinct evolution patterns of the Nuclear norm in the QKVO projection matrix across layers and training epochs: gradients for easy samples decay rapidly and converge smoothly, whereas hard samples maintain elevated gradient norms with persistent oscillations. Here, $\mathbf{W}_{A}$ and $\mathbf{W}_{B}$ denote the low-rank matrix in LoRA~\cite{hu2022lora}.
    }
    \label{fig:observation}
    \vspace{-1em}
\end{figure*}

To gain deeper insights into the optimization dynamics of latent reasoning, we conducted a systematic analysis of gradient flows and parametric evolution during training. This investigation reveals two meaningful observations: \textit{(1) Visual-Text Optimization Disparity:} Recent studies~\cite{DBLP:journals/corr/abs-2507-03019, DBLP:conf/iclr/0002KCI25} have highlighted the phenomenon of visual attention attenuation in MLLMs as the explicit Chain-of-Thought (CoT) reasoning chain extends. We discover that a similar degradation existing in latent reasoning. As depicted in Fig.~\ref{fig:observation}\subref{fig:v_vs_t}, the gradient updates for visual tokens (regions marked by dashed lines) are remarkably small, while the overall optimization is predominantly governed by gradients from text tokens and latent token. This suggests that the model relies primarily on textual information and latent representations to generate final answers during training, revealing a pronounced gradient imbalance phenomenon.
\textit{(2) Fixed-Depth Optimization Dilemma:} Beyond modality imbalance, we observe an architecture bottleneck in how models handle samples with different complexity. Partitioning the training data into \textit{easy} (consistently correct) and \textit{hard} (persistently incorrect) subsets based on early-stage validation accuracy, we tracked the gradient nuclear norms across the QKV and output projection matrices O (Fig.~\ref{fig:observation}\subref{fig:easy} and Fig.~\ref{fig:observation}\subref{fig:hard}). The trends reveal a stark divergence: easy samples exhibit smooth gradient decay, indicating stable convergence into favorable loss basins. In contrast, hard samples maintain persistently high gradient volatility even in late training epochs. This phenomenon underscores the necessity of iterative refinement for complex reasoning, which is crucial for parsing compositional patterns in complex contexts while mitigating inherent visual ambiguities in images. Fixed-depth architectures, however, lack the flexibility to adapt to varying token complexities, thereby trapping hard examples in oscillatory optimization trends.

Motivated by these observations, we propose a unified framework that strengthens fine-grained visual engagement and achieves token-wise depth scaling strategy to enable more precise and comprehensive contextual reasoning. Firstly, to mitigate visual optimization instability, we design a visual replay module, which dynamically replays the focused visual clues interleaved with thinking latents. This mechanism iteratively exposes and propagates key visual context across reasoning steps, fostering step-wise alignment with the target answer. Complementing this, we apply self-distillation supervision to enforce spatial coherence and preserve fine-grained visual details within the visual latents. Secondly, to address the fixed-depth optimization bottleneck, we propose a per-layer token router that dynamically allocates additional reasoning steps based on token complexity or information density. This design enables high-difficulty tokens to engage in prolonged contextual reasoning by reusing layer-wise knowledge, facilitating iterative representation refinement and adaptive prioritization of salient contextual cues.
Finally, in contrast to methods~\cite{hao2024training, codi} that rely on knowledge distillation for direct latent supervision, we employ a curriculum learning strategy that progressively introduces latent tokens into the training pipeline. Extensive experiments show that our method can be effortlessly combined with various widely-used MLLM backbones to further enhance reasoning performance while maintain the satisfactory inference latency. The main contributions can be summarized as follows:
\begin{itemize}
\item We systematically analyze token-level gradient dynamics during latent reasoning training, revealing two critical optimization bottlenecks: visual-text optimization disparity and fixed-depth optimization dilemma.
\item We present a unified curriculum-driven framework that progressively constructs interleaved latent representations. By integrating spatially-coherent visual constraints for fine-grained grounding and complexity-aware depth scaling, our approach enables robust and precise contextual reasoning.
\item Extensive experiments across twelve widely-used multimodal reasoning benchmarks demonstrate that our method achieves state-of-the-art performance while maintaining high inference efficiency.
\end{itemize}

\section{Related Work}
\label{sec:related_work}
\noindent \textbf{Explicit Multimodal Reasoning.} 
Multimodal reasoning enables model to reason over information from different modalities to solve complex tasks. There are many prior works~\cite{Yu2026TheLS, hu2022promptcap, ssa, Zheng_NeurIPS2023, MitraCCoT, DBLP:journals/corr/abs-2601-01984, MitraCCoT, kamcot, dynfocus, super, Yu2025VisMemLV, Liu2026MemaMA, yang2025unleashing, yang2024boosting} focusing extensively on enhancing reasoning capabilities. Earlier work rely on the CoT prompting to perform explicit thinking steps in the text space before generating the final answer. However, this paradigm generally lack sufficient visual grounding capability and leads to unsatisfactory misalignment and hallucination~\cite{huang-etal-2024-visual, DBLP:journals/corr/abs-2404-18930}.
To address these limitations, recent studies~\cite{Zheng_NeurIPS2023, hu2022promptcap, MitraCCoT} have explored converting visual information into textual formats prior to reasoning, leveraging external tools or specialized visual experts to generate descriptive representations that guide LLMs. For instance, \citet{hu2022promptcap} pioneered the integration of visual captions, extracting semantic content as text and concatenating it with input prompts to bolster reasoning capabilities. To further enhance fine-grained reasoning, subsequent works have focused on regional understanding, aiming to improve textual expressiveness by describing specific image regions. Others~\cite{DBLP:journals/corr/abs-2601-01984, MitraCCoT, kamcot} identified entities and their relationships within images, facilitating fine-grained reasoning through explicit modeling of inter-entity connections.

A line of concurrent works advocates using vision-text interleaved format during the rationale generation and reasoning process. The model draws auxiliary lines or marks based on original image to record thinking path, zoom or crop regions, or perform code editing, etc. Building on these paradigms, \citet{zhang2023multimodal} first proposed decoupling rationale generation from answer generation in the Vision-Text Reasoning field. Subsequently, \citet{Shao2024VisualCA} annotated key regions of the original image in intermediate steps, training models to focus on image regions relevant to the answer. While some works~\cite{gao2024interleaved, zhang2025chain} further extract key image regions progressively during reasoning, combining visual information with textual reasoning to generate the final answer. Moreover, new methods~\cite{visualsketchpad, liu2025Visual} emulated human thought by sketching images during reasoning, focusing on core concepts, structures, and relationships while ignoring redundant details. Other works~\cite{li2025imaginereasoningspacemultimodal, chern2025thinking} generated new images with auxiliary markers during reasoning, combining them with text to improve reasoning in complex scenarios. To fully shift reasoning from the linguistic domain to the visual modality, \citet{xu2025visualplanningletsthink} proposes to reasoning exclusively with dynamic generated images, demonstrating substantial performance gains in visual navigation tasks.

More recently, Vision-R1~\cite{DBLP:journals/corr/abs-2503-06749} and VL-Rethinker~\cite{DBLP:journals/corr/abs-2504-08837} leveraged Group Relative Policy Optimization~\cite{deepseekr1} (GRPO) to refine reasoning trajectories through rollout-based sampling and reward scoring. Complementing these policy-driven approaches, concurrent works further enhance reasoning capabilities via novel cognitive paradigms, including self-critiquing cycles~\cite{DBLP:journals/corr/abs-2405-06682, DBLP:conf/cvpr/CocchiMC0C25}, iterative rethinking~\cite{lookback2025}, and on-policy distillation~\cite{DBLP:journals/corr/abs-2601-18734}.

\noindent \textbf{Latent Reasoning.} 
Different from explicit reasoning in the discrete token space, latent reasoning refers to internal computation performed in a hidden space before answer generation. \citet{hao2024training} pioneers continuous latent space reasoning by feeding the last hidden states as input embeddings for the next step without generating intermediate discrete tokens, substantially reducing reasoning latency. However, subsequent studies have indicated that such paradigm may suffer from feature homogenization without explicit supervision on intermediate latent states. To address this limitation, a series of works attempt to enhance the quality of intermediate representations using diverse strategies. \citet{cheng2024compressed} introduced variable-length contemplation tokens for latent reasoning, mitigating quality degradation caused by fixed-length constraints. Similarly, \citet{shen2025codi} leveraged distillation tactic to align student and teacher hidden activations along with explicit supervision, thereby constraining latent reasoning paths. Beyond these efforts, \citet{DBLP:journals/corr/abs-2509-20317} adopted step-level supervision to further stabilize the reasoning space.

Recently, latent reasoning has been extended to Multimodal Large Language Models (MLLMs). Distinct from text-only LLMs, MLLMs necessitate the effective integration of visual features within the latent reasoning space. Several efforts \cite{yang2025machine, li2025latent, pham2025multimodal, liu2025reasoningminddynamicmultimodal,laser2026forest} have been dedicated to injecting visual cues into the latent space to facilitate visual-grounded reasoning. For instance, \citet{laser2026forest} emphasize image details by constructing a structured cognitive hierarchy, albeit relying on annotation-intensive multimodal reasoning data. Similarly, \citet{liu2025reasoningminddynamicmultimodal} progressively select visual patches to inject into latent thinking tokens via confidence-guided policy gradient optimization. In this work, we systematically analyze gradient dynamics during latent reasoning training and reveal two critical bottlenecks towards token-wise optimization behavior.

\section{Method}
\label{sec:method}

\subsection{Preliminary: Implicit and Explicit Decoding}
Given a multimodal input comprising a question $\cQ$ and an image $\cV$, we tokenize them into a text embedding sequence $\bQ=\{\bq_i\}_{i=1}^{N_q}$ and visual features $\bV=\{\bv_i\}_{i=1}^{N_v}$ using a word embedding matrix $\bE\in\bbR^{|\mathcal W|\times d}$ and a pretrained visual encoder, respectively. We concatenate $\bX=[\bQ\|\bV]$ and use an autoregressive MLLM $\cF_\theta$ to obtain the prefilling hidden states $\bH^{(0)}\in\bbR^{P\times d}$, where $P=N_q+N_v$. During decoding, we fix $T_r$ implicit reasoning steps and $T_a$ explicit answer tokens. The generation process has two stages.

\noindent{\bf Implicit Reasoning Phase.}
At step $t$, the model produces a continuous latent vector $\bz^{(t)}$ conditioned on $\bX$ and previous latent states:
\begin{equation}
\bz^{(t)}=\cT\!\left(\cF_\theta\!\left(\bX \,\|\, \bZ_{<t}\right)\right),\quad t=1,\dots,T_r,
\end{equation}
where $\bZ_{<t}=\{\bz^{(1)},\dots,\bz^{(t-1)}\}$ and $\cT(\cdot)$ extracts the hidden state of the last token. The latent sequence $\bZ_{1:T_r}$ is then concatenated to the context for answer decoding.

\noindent{\bf Explicit Answer Decoding.}
The answer tokens $\ba_{1:T_a}$ are generated autoregressively. For $t$-th answer token $a_t$:
\begin{equation}
p_\theta(a_t \mid \bX,\bZ_{1:T_r},a_{<t})
=\mathrm{Softmax}\!\left(\bW_o\bh_t^{\text{dec}}\right),
\end{equation}
where $\bW_o\in\bbR^{|\mathcal W|\times d}$ is the output projection matrix and $\bh_t^{\text{dec}}$ is the decoder hidden state at step $t$.
The joint likelihood factorizes as
\begin{equation}
p_\theta(\ba_{1:T_a}\mid\bX,\bZ_{1:T_r})
=\prod_{t=1}^{T_a} p_\theta(a_t\mid\bX,\bZ_{1:T_r},a_{<t}).
\end{equation}
At each decoding step, the decoder input is $\bU_t=[\bX\|\bZ_{1:T_r}\|a_{<t}]$, with $\|$ denoting concatenation along the token dimension.

\begin{figure*}
    \centering
    \includegraphics[width=1.0\linewidth]{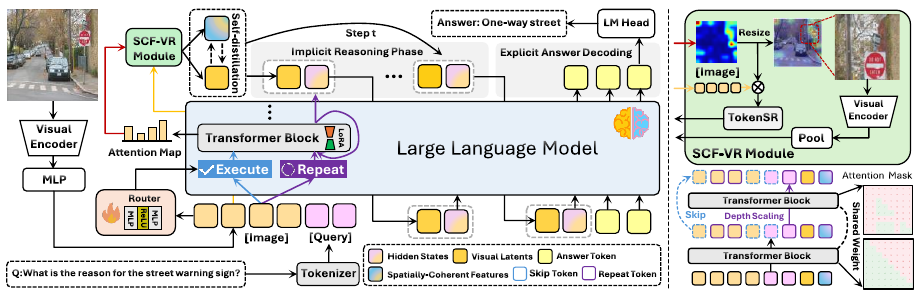}
    \caption{\noindent\textbf{Left Panel:} Schematic illustration of our framework. Input images are encoded into visual tokens by a pretrained visual encoder and projected into a text-centric semantic space aligned with the LLM, while questions are tokenized by the text tokenizer. Our method enhances a standard latent MLLM with two synergistic components during the implicit reasoning phase: Spatially-Coherent Finer Visual Replay (SCF-VR) and Routing Depth Scaling (RDS). The overall objective is jointly optimized via standard cross-entropy loss and self-distillation loss. \textbf{Right Panel:} Detailed architecture of the SCF-VR module and token-wise depth scaling mechanism. \textbf{TokenSR} denotes the token super-resolution module, and \textbf{Pool} is the average pooling.}
    \label{fig:framework}
\end{figure*}

\subsection{Spatially-Coherent Finer Visual Replay}
As mentioned earlier, we empirically reveal the gradient disparities between visual and textual tokens throughout the learning dynamics. Specifically, visual tokens consistently exhibit substantially smaller gradient Frobenius norms compared to textual tokens and latent token, indicating that visual representations remain under-optimized despite their critical role in multimodal reasoning, which aligns with the conclusions drawn in prior studies~\cite{DBLP:conf/cvpr/PengWD0H22, DBLP:journals/corr/abs-2507-10203}. Motivated by these insights, we introduce the visual replay module to reinforce the engagement of visual cues via salient region detection, while enhancing fine-grained spatially-coherent perception capabilities via self-distillation supervision at each reasoning step.

\noindent{\bf Attention-Guided Region Focus.} 
Several works~\cite{Li_2025_ICCV, DBLP:conf/naacl/ZhangQSYYXWGTY25} have demonstrated that LLMs exhibit fundamental visual grounding capabilities. To identify visually salient regions, we aggregate attention weights across all transformer layers and attention heads to obtain a consolidated spatial focus map. Specifically, given the $l$-th layer and the $h$-th attention head, we compute the mean attention map $\bar{\mathbf{A}}^{(t)}$ at reasoning step $t$:
\begin{equation}
\bar{\mathbf{A}}^{(t)} = \frac{1}{L \cdot H} \sum_{l=1}^{L} \sum_{h=1}^{H} \mathbf{A}^{(l,h,t)},
\label{eq:agg_attention}
\end{equation}
where $\mathbf{A}^{(l,h,t)} \in \mathbb{R}^{P^{(t)} \times P^{(t)}}$ denotes the attention matrix for layer $l$ and head $h$ at iteration $t$ ($1 \leq t \leq T_r$), with $P^{(t)}$ representing the number of input tokens at iteration $t$. Here, $L$ and $H$ represent the total number of layers and heads, respectively. To obtain token-level attention scores, we extract the attention distribution from the most recently generated token to all preceding tokens via column-wise summation, i.e., $\mathbf{a}_{all}^{(t)} = \mathrm{colsum}(\bar{\mathbf{A}}^{(t)}) \in \mathbb{R}^{P^{(t)}}$. Subsequently, we extract only the visual token attention scores from $\mathbf{a}_{all}^{(t)}$ using the image mask, denoted as $\mathbf{a}^{(t)} \in \mathbb{R}^{N_v}$, where $N_v$ is the number of visual tokens. This visual attention vector effectively captures which visual tokens are most relevant to the current reasoning context.

As visual focus evolves across reasoning steps, we iteratively select the top-$K$ attended visual tokens $\{ \mathbf{v}_{i}^{(t)}\}_{i=1}^{K}$ as visual latents, which are integrated with hidden states $\mathbf{z}_{t}$. To prevent redundant re-selection and promote diverse exploration, we maintain a visited token set $\mathcal{V}^{(t)}_{\text{visited}}$, ensuring comprehensive visual coverage:
\begin{equation}
\mathcal{I}^{(t)} = \operatorname{TopK}\left(
    \{ a^{(t)}_{i} \mid i \in  \mathcal{V}^{(t)}_{\text{visited}} \}\right),
\label{eq:topk_selection}
\end{equation}
where $\mathcal{I}^{(t)}$ represents the indices of the $K$ visual tokens with the highest attention scores at step $t$, and $\mathcal{V}^{(t)}_{\text{visited}}$ is updated after each selection. The original embeddings of the selected tokens $\mathbf{V}_{\mathcal{I}^{(t)}} = \{\mathbf{v}_{i} \mid i \in \mathcal{I}^{(t)}\}$ are further weighted by normalized attention scores:
\begin{equation}
\mathbf{B}^{(t)} = \mathrm{Diag}(\mathrm{Softmax}(\{a^{(t)}_{i} | i \in \mathcal{I}^{(t)} \})) \mathbf{V}_{\mathcal{I}^{(t)}},
\label{eq:attention_weight}
\end{equation}
where $\mathrm{Diag}(\cdot)$ constructs a diagonal matrix from a vector.

\noindent{\bf Spatially-Coherent Regularization.}
Although leveraging learned attention within Transformers provides explainable visual locations, it often suffers from limited spatial continuity due to the scattered nature of selected tokens and introduces noise associated with the attention sink phenomenon~\cite{DBLP:conf/iclr/XiaoTCHL24}. To mitigate these issues and enhance fine-grained perception without external annotations, we introduce self-distillation supervision. This mechanism involves cropping the visual regions exhibiting spatial coherence, re-encoding them, and supervising the visual latents with these high-fidelity features.

Specifically, we first search for a $W \times W$ sub-grid patch that maximizes the density of attended visual tokens. Formally, we find the optimal top-left corner 
$(r^*, c^*)$ within the valid grid bounds:
\begin{equation}
(r^*, c^*) = \underset{0 \leq r, c \leq G-W}{\operatorname{argmax}} \; \mathcal{N}(r, c),
\end{equation}
where the density function $\mathcal{N}(r, c)$ counts the visited tokens falling within the window $\mathcal{N}(r, c) = \sum_{i \in \mathcal{I}^{(t)}} \mathbb{I}\big[r \leq r_i < r+W, c \leq c_i < c+W\big].$ Here, $(r_i, c_i)$ denotes the row-column position of token $i$ on the $G \times G$ grid, and $\mathbb{I}[\cdot]$ is the indicator function.
This greedy selection ensures the cropped region captures a coherent visual context rather than scattered details. Then, the selected window is projected to pixel coordinates in the original image via mathematical transformation, cropped, 
and resized to the standard encoder input resolution using bilinear interpolation. We then re-encode this refined patch through the same visual encoder to obtain fine-grained representations $\{\mathbf{f}_{i}\}_{i=1}^{N_v'}$. 
A global pooling operation $\mathrm{Pool}(\cdot)$ is applied to yield a robust reference token $\mathbf{u}^{\text{ref}}$.
Finally, we align the coarse-grained global token $\mathbf{b}^{(t)} = \mathrm{Pool}(\mathbf{B}^{(t)})$ with this high-fidelity reference using a lightweight 
token super-resolution module $\mathcal{F}_{\text{SR}}: \mathbb{R}^D \to \mathbb{R}^D$. 
We minimize the reconstruction error as follows:
\begin{equation}
\mathcal{L}_{\text{recon}}^{(t)} = \left\| \mathcal{F}_{\text{SR}}\left(\mathbf{b}^{(t)}\right) - \mathbf{u}^{\text{ref}} \right\|_2^2.
\label{eq:reconstruction_loss}
\end{equation}
This supervisory signal enables the model to prioritize spatially coherent and semantically intact visual contexts during latent generation through self-distillation.
\subsection{Routing Depth Scaling}
\label{method:routing}
While visual replay mechanisms significantly enhance fine-grained grounding by incorporating spatially-coherent visual latents, existing methods typically allocate uniform computational budgets across prefilling tokens and newly generated latents during contextual refinement. Intuitively, however, tokens and latents contribute heterogeneously to the final reasoning objective, rendering static budget allocation suboptimal. We hypothesize that a token's processing requirement is intrinsically linked to its relative \textit{difficulty}, which can be quantified via optimization dynamics.

\noindent\textbf{A Theoretical Perspective: Rank Dynamics as a Proxy for \textit{Difficulty}.}
To formalize \textit{difficulty}, we analyze the optimization dynamics of QKVO weight matrices, positing that their evolution reflects sample complexity. For simple samples, consistent gradient updates drive weights toward a compact, low-rank structure where principal singular values dominate and noise is suppressed. Conversely, difficult samples involve conflicting contexts that force representations to constantly adjust between competing patterns. This instability manifests as \textit{resurrected} suppressed singular values or reshaped principal distributions, causing persistent fluctuation in the nuclear norm of matrix. We quantify these dynamics via the nuclear norm, defined as follows:

\begin{definition}[Nuclear Norm of a Matrix]
\itshape
\label{def:nuclear_norm_matrix}
Let $\mA \in \mathbb{R}^{m \times n}$ be a weight matrix in transformer, the Singular Value Decomposition (SVD) of $\mA$ is given by $\mA = \mU \mS \mV^\top$, where $\mU \in \mathbb{R}^{m \times m}$ and $\mV \in \mathbb{R}^{n \times n}$ are orthogonal matrices, and $\mS \in \mathbb{R}^{m \times n}$ is a diagonal matrix containing the non-negative singular values $\sigma_1 \ge \sigma_2 \ge \dots \ge \sigma_r > 0$, with $r = \mathrm{rank}(\mA)$. The nuclear norm of $\mA$, denoted as $\|\mA\|_*$, is defined as the sum of its singular values:
\[
  \|\mA\|_* = \sum_{i=1}^{\min(m,n)} \sigma_i(\mA) = \mathrm{Tr}\left(\sqrt{\mA^\top \mA}\right).
\]
\end{definition}
Based on above defined \textit{difficulty} indicator, we reveal heterogeneous optimization complexities across tokens during latent training, termed the \textit{fixed-depth optimization dilemma}. To address this without modifying the pretrained VLM architecture while effectively leveraging its inherent knowledge, we introduce a lightweight router that dynamically allocates additional reasoning steps exclusively to critical tokens, enabling adaptive reasoning depths.

\noindent\textbf{Router Network.} 
Let the input token sequence at the $t$-th reasoning step be denoted as $\mathbf{U}^{(t)} = [\mathbf{X} \| \mathbf{B}^{(1)} \| \mathbf{z}^{(1)} \| \cdots \| \mathbf{B}^{(t)} \| \mathbf{z}^{(t)}] \in \mathbb{R}^{P^{(t)} \times d}$, where $P^{(t)}$ represents the total sequence length and $d$ is the hidden dimension. To facilitate the illustration of our depth scaling mechanism, we decompose the forward pass into layer-wise computations. Specifically, within the $l$-th transformer layer, we first compute the intermediate feature $\mathbf{H}^{(l, t)} = f(\mathbf{U}^{(t)}) \in \mathbb{R}^{P^{(t)} \times d}$, where $f(\cdot)$ denotes the transformation of the $l$-th transformer layer in LLM. Subsequently, a lightweight router network computes a scalar importance score for each token based on its corresponding hidden representation:
\begin{equation}
    \mathbf{s}^{(l, t)} = \mathcal{F}_{\text{router}}(\mathbf{H}^{(l, t)}) \in \mathbb{R}^{P^{(t)}},
\end{equation}
where the $i$-th element $s^{(l, t)}_i$ quantifies the importance of the $i$-th token in the $l$-th layer, and $\mathcal{F}^{(l)}_{\text{router}}(\cdot)$ denotes the independent router network in the $l$-th layer. We define $T_{\alpha}(\mathbf{s}^{(l, t)})$ as the index set of the top-$\alpha$ tokens with the highest scores, where $\alpha$ serves as a predefined hyper-parameter. In practice, $\alpha$ can be formulated as a layer-dependent function $\alpha(l)$ to enable adaptive resource allocation at different layers.

\noindent\textbf{Depth Scaling Computation.} Obtaining the router weight, we select the top-$\alpha$ tokens for depth scaling, i.e., repeat iteration in one transformer block. For a given depth scaling step $d \in \{1, \ldots, D\}$, the token-wise representation update rule within a transformer layer can be generally formulated as follows:
\begin{equation}
    \mathbf{h}_{i}^{(d)} = 
    \begin{cases}  s^{(d)}_{i} \odot f_{\in T_{\alpha}(\mathbf{s})}(\mathbf{h}^{(d-1)}_{i}, \mathbf{m}^{(d-1)}), & \text{if } i \in T_{\alpha}(\mathbf{s}^{(d)}), \\
        \mathbf{h}_{i}^{(d-1)}, & \text{if } i \notin T_{\alpha}(\mathbf{s}^{(d)}).
    \end{cases}
    \label{eq:selective_update}
\end{equation}
For notational brevity, we omit the layer index $l$ and reasoning step $t$, denoting refinement depth by $d$ (e.g., $\mathbf{s}^{(l,t)} \to \mathbf{s}^{(d)}$). Let $\mathbf{h}^{(0)}_{i}$ denote the initial hidden state of token $i$, and $\mathbf{m}^{(d)}$ the attention mask at step $d$. A router network dynamically selects a critical token subset $T_{\alpha}(\mathbf{s}^{(d)})$ based on importance scores, acting as a binary gate. Consequently, the transformation $f(\cdot)$ is applied exclusively to tokens $i \in T_{\alpha}(\mathbf{s}^{(d)})$ for $d$ refinement steps, while non-critical tokens retain their prior states to ensure efficiency. Finally, we aggregate the refined representations with step-aware positional encodings to incrementally form the final hidden state:
\begin{equation}
    \mathbf{H}^{(D)} = \mathbf{H}^{(0)} + \sum_{d=1}^{D} \left( \mathbf{H}^{(d)} \odot \mathbf{e}^{(d)} \right).
    \label{eq:aggregation}
\end{equation}
This adaptive strategy allocates deeper reasoning pathways to critical tokens, enabling the model to capture complex visual contexts and facilitate more profound reasoning capabilities. We further provide a theoretical analysis demonstrating that, under mild assumptions, our design achieves a tighter generalization bound (see \textbf{Appendix}).
\begin{theorem}[Generalization Analysis of Token-wise Depth Scaling]
\itshape
\label{thm:main}
Let $\mathcal{T}_{\text{std}}$ denote the standard $L$-layer Transformer with total parameter count $W_{\text{std}} = \Theta(Ld^2)$. Let $\mathcal{T}_{\text{topk}}$ denote the token-wise depth scaling variant with Top-K selection, having total parameter count $W_{\text{topk}} = W_{\text{std}} + Md = W_{\text{std}}(1 + O(1/d))$, where $M = \lceil L/2 \rceil$ is the number of modified layers. Under \textit{Assumptions~\ref{ass:weight-bounded}--\ref{ass:regularity}}, with probability at least $1-\delta$, the population risk satisfies:
\begin{equation}
\mathcal{L}_{\mathcal{D}}(f_{\text{topk}}) \leq \mathcal{L}_{\mathcal{D}}(f_{\text{std}}) - \frac{\delta_0 K L}{4n} \cdot \eta_{\text{eff}} + \tilde{\mathcal{O}}\left(\sqrt{\frac{dL\log(nL/\delta)}{N}}\right) - \Omega\left(\frac{K^2L}{n^2}\sqrt{\frac{W_{\text{std}}}{N}}\right),
\label{eq:bound}
\end{equation}
where $\delta_0 > 0$ is the minimal improvement from single-layer iteration, $K = \rho n$ is the selection number, $\eta_{\text{eff}}$ is the effective selection rate, $d$ is the hidden dimension, and $N$ is the number of training samples.
\end{theorem}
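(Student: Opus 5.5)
The natural route is the usual split of population risk into empirical risk plus a uniform-convergence term, applied to both architectures and then compared:
\[
\mathcal{L}_{\mathcal{D}}(f)\le \widehat{\mathcal{L}}_N(f)+\mathcal{G}_N(\mathcal{T}).
\]
We write this for $f_{\text{topk}}\in\mathcal{T}_{\text{topk}}$ and for $f_{\text{std}}\in\mathcal{T}_{\text{std}}$, and subtract. The difference then breaks into three parts, and each part should produce one term of the bound in the theorem.
\begin{enumerate}
\item An \emph{optimization gain} on the empirical risk, which gives the $-\delta_0 KL\eta_{\text{eff}}/(4n)$ term.
\item A \emph{complexity overhead}, coming from the router parameters and the discrete Top-$K$ selection, which gives the $\tilde{\mathcal O}(\sqrt{dL\log(nL/\delta)/N})$ term.
\item A \emph{second-order correction}, which gives the $-\Omega(\cdot)$ term.
\end{enumerate}
Since the statement defers the assumptions to the Appendix (Assumptions~\ref{ass:weight-bounded}--\ref{ass:regularity}), I will use them as follows: bounded weight norms for the Lipschitz and covering estimates, and the regularity assumption to guarantee that one extra iteration of a block on a selected token lowers its per-token loss by at least $\delta_0$.

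\textbf{Step 1: optimization gain.} Set the router so that the depth-scaled layers can emulate the standard network. The gate $s_i\to 0$ on extra iterations together with the residual aggregation in \eqref{eq:aggregation} shows that $\mathcal{T}_{\text{std}}\subseteq\mathcal{T}_{\text{topk}}$. Hence the empirical minimizer over $\mathcal{T}_{\text{topk}}$ is never worse than the one over $\mathcal{T}_{\text{std}}$.

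To get a strict improvement, count the refined tokens. In each of the $M=\lceil L/2\rceil\ge L/2$ modified layers, $K$ of the $n$ tokens are refined. A fraction $\eta_{\text{eff}}$ of these selections hits genuinely hard tokens, and each such hit gains at least $\delta_0$. The loss is averaged over the $n$ tokens, and I will take Lipschitz propagation through the remaining layers to cost at most a further factor $1/2$. Together with $M\ge L/2$, this gives the constant $1/4$ in the aggregate gain $\delta_0KL\eta_{\text{eff}}/(4n)$.

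\textbf{Step 2: complexity overhead.} Bound the covering number of $\mathcal{T}_{\text{topk}}$ as the product of two factors.
\begin{itemize}
\item The continuous parameters contribute the standard $W\log(\cdot)$ metric entropy. Only the increment $Md=O(Ld)$ is new relative to $\mathcal{T}_{\text{std}}$.
\item The discrete selection patterns contribute at most $\binom{n}{K}^{ML}\le (en/K)^{KML}$. For the logarithm, I plan instead to use a sign-pattern/VC argument on the linear router scores, which keeps the contribution to $\log(nL)$ per router parameter.
\end{itemize}
Dudley's integral or the Bartlett-style margin bound then gives an excess generalization term of order $\sqrt{dL\log(nL/\delta)/N}$ relative to $\mathcal{T}_{\text{std}}$.

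\textbf{Step 3: second-order correction (the main obstacle).} This is the negative $\Omega\big(\tfrac{K^2L}{n^2}\sqrt{W_{\text{std}}/N}\big)$ term, and I expect it to be the hardest part. Its sign says that sparsity of the selection effectively contracts the Lipschitz constant of the hypothesis class.

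My first attempt will be a Rademacher contraction argument. For a fraction $1-K/n$ of the tokens, the extra iterations act as the identity. So the per-layer Jacobian of $\mathcal{T}_{\text{topk}}$ on the shared weights reduces the standard Rademacher term $\sqrt{W_{\text{std}}/N}$ by a quantity quadratic in the selected fraction $\rho=K/n$, arising from cross terms between pairs of selected tokens, summed over $L$ layers. If the sign cannot be made rigorous this way, my fallback is to absorb the term into the constants: it is lower order whenever $\rho^2\sqrt{W_{\text{std}}/N}\ll\rho\,\delta_0\eta_{\text{eff}}$.

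A union bound over the high-probability events from Steps 1–3 completes the argument with overall probability $1-\delta$.
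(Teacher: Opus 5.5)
\textbf{Verdict: genuine gap.} Your three-way split (gain from refined tokens, router overhead, sparsity benefit, with $\tfrac14=\tfrac12\cdot\tfrac12$ and one factor coming from $M\approx L/2$) is the skeleton the paper uses. However, the step you flag as the main obstacle fails as designed, and the fallback does not rescue it. In Step 1 you assert $\mathcal{T}_{\text{std}}\subseteq\mathcal{T}_{\text{topk}}$. Rademacher complexity is monotone under inclusion, so this gives $\mathfrak{R}_N(\mathcal{T}_{\text{topk}})\ge\mathfrak{R}_N(\mathcal{T}_{\text{std}})$, and no contraction argument can then place the depth-scaled class below the standard one. Even without nesting, your mechanism supplies no contraction. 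On unselected tokens the layer coincides with the standard one. On selected tokens it composes an extra residual block whose Lipschitz bound is at least $1$, so the per-layer bound can only grow.

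The paper gets the negative term from a different ingredient, a sparse-attention Lipschitz lemma. The second pass attends only among the $K$ selected tokens and is credited with effective Lipschitz constant $\rho L_{\text{lip}}^2$, which gives $\mathrm{Lip}\le L_{\text{lip}}(1-\rho+\rho^2L_{\text{lip}})$ per modified layer. The product over modified layers enters the upper bound on $\mathfrak{R}_N(\mathcal{F}_{\text{topk}})$ as a factor $\exp(-c_0K^2L/n^2)$. This is then set against an explicit lower bound $\mathfrak{R}_N(\mathcal{F}_{\text{std}})\ge C'R_{\ln}L_{\text{lip}}^L\sqrt{W_{\text{std}}\log W_{\text{std}}/N}$. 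That lemma is itself heuristic, since a token-weighted average of per-token factors is not the Lipschitz constant of the layer. Still, it is the idea your route lacks.

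As for your fallback: the $-\Omega(\cdot)$ term strengthens the bound, so it cannot simply be absorbed into constants. It could only be paid for out of slack in the gain term, but your Step 1 spends the constant exactly at $\tfrac14$. Moreover, your regime condition $\rho^2\sqrt{W_{\text{std}}/N}\ll\rho\,\delta_0\eta_{\text{eff}}$ is not a hypothesis of the theorem.

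The comparison step has a separate gap. Two inequalities $\mathcal{L}_{\mathcal{D}}(f)\le\widehat{\mathcal{L}}_N(f)+\mathcal{G}_N(\mathcal{T})$ pointing the same way cannot be subtracted. You need $\mathcal{L}_{\mathcal{D}}(f_{\text{std}})\ge\widehat{\mathcal{L}}_N(f_{\text{std}})-\mathcal{G}_N(\mathcal{T}_{\text{std}})$. Then both gaps enter with a plus sign, and the cost is of order $\sqrt{W_{\text{std}}\log(\cdot)/N}$ rather than the router increment $\sqrt{dL\log(nL/\delta)/N}$. Your Step 2 remark that only $Md$ parameters are new would matter only for a difference of gaps, and uniform convergence does not control such a difference. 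The paper's Stage III makes the analogous move, replacing $\mathrm{Gap}_{\text{std}}-\mathrm{Gap}_{\text{topk}}$ by a difference of Rademacher bounds. Its appendix restatement of the theorem accordingly carries the full $\tilde{\mathcal{O}}(\sqrt{W_{\text{topk}}\log(N/\delta)/N})$.

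Finally, in Step 1 the fraction $\eta_{\text{eff}}$ must be derived, not asserted. The gain $\delta_0$ comes from an explicit assumption that a second pass helps on a set of improvable (representation, label) pairs of probability $\gamma_{\text{eff}}$, not from regularity. The paper then proves a coverage lemma, $\mathbb{E}|S^{(l)}\cap S^*_{\text{eff}}|\ge K\eta_{\text{eff}}-K\epsilon_{\text{select}}/2$. It uses an oracle-router assumption, a learned-router assumption $\mathbb{E}|S^{(l)}\triangle S^*|/K\le\epsilon_{\text{select}}$, and the identity $|S^{(l)}\cap S^*|=K-|S^{(l)}\triangle S^*|/2$, which leaves an $O(KL\epsilon_{\text{select}}/n)$ error. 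You also have to bound the harm on selected but non-improvable tokens. And your factor $\tfrac12$ cannot come from a Lipschitz bound: that limits how a perturbation grows downstream, but says nothing about whether an intermediate-layer loss decrease survives to the output.
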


\subsection{Training Procedure}
\label{subsec:training_objective}
\noindent\textbf{Curriculum Latent Training.} 
To reduce annotation overhead and avoid human priors limiting learning, we do not directly supervise intermediate latent representations. Instead, we introduce a curriculum that links implicit latents with explicit reasoning chains. 
We first train the model with standard Chain-of-Thought (CoT) supervision, generating all reasoning steps explicitly to build baseline reasoning ability. Then, over training, we gradually replace one explicit step at a time with an informative $\langle \text{latent} \rangle$ token. This curriculum enables each latent to progressively ground relevant contextual cues, internalizing explicit reasoning into compact latent representations.

\noindent\textbf{Training Objective.} 
The overall training objective combines the standard language modeling loss with the self-distillation loss in the VR-SCF module,
\begin{equation}
\mathcal{L}_{\text{total}} = 
\mathcal{L}_{\text{CE}} + \lambda \frac{1}{T_{r}} \sum_{t=1}^{T_{r}} \mathcal{L}_{\text{recon}}^{(t)}, ~~~\mathcal{L}_{\text{CE}} = -\sum_{t=1}^{T_{r} + T_{a}}  \log p_{\theta}\left( a_{t} | \mathbf{U}^{(t)}, a_{<t} \right),
\label{eq:total_loss}
\end{equation}
where $\mathcal{L}_{\text{CE}}$ is the standard cross-entropy loss over the language modeling task,
where $\lambda$ is a hyperparameter balancing the primary task and the auxiliary reconstruction objective. During inference, only the LLM component is active, and the visual play module is disabled, ensuring no additional computational overhead at test time.

\begin{table*}[t]
\centering
\caption{Comparison of various multimodal reasoning baselines across three benchmarks. We selected three datasets featuring detailed reasoning chains for training and evaluate on their test split, respectively. Three metrics are reported: Accuracy (\%), Average number of Autoregressive Steps (\# AR steps), and Average Generation Time (AVG. Time). $^\dagger$ denotes the reimplementation for the methods with the same configuration with ours. No-CoT notes that directly predicts answers without generating intermediate steps.} 
\small
\renewcommand{\arraystretch}{1.05}
\resizebox{1.0\linewidth}{!}{%
\setlength{\tabcolsep}{8pt}
\begin{tabular}{ll@{\hspace{6pt}}ccc@{\hspace{6pt}}ccc@{\hspace{6pt}}ccc}
\toprule[1.5pt]
\multicolumn{2}{c}{} &
\multicolumn{3}{c}{\textbf{M$^3$CoT}} &
\multicolumn{3}{c}{\textbf{ScienceQA}} &
\multicolumn{3}{c}{\textbf{GQA}} \\
\cmidrule(lr){3-5} \cmidrule(lr){6-8} \cmidrule(lr){9-11}
\textbf{Method} & \textbf{Model} &
Acc.(\%) $\uparrow$ & \# AR Steps $\downarrow$ & Avg. Time $\downarrow$ &
Acc.(\%) $\uparrow$ & \# AR Steps $\downarrow$ & Avg. Time $\downarrow$ &
Acc.(\%) $\uparrow$ & \# AR Steps $\downarrow$ & Avg. Time $\downarrow$ \\
\midrule

No-CoT &  & 45.4 & -- & -- & 64.4 & -- & -- & -- & -- & -- \\
Multimodal CoT~\citep{zhang2023multimodal} &  & 42.5 & 106.3 & 3.10 & 58.3 & 83.9 & 2.44 & -- & -- & -- \\
CCoT~\citep{MitraCCoT} &  & 44.1 & 177.2 & 5.31 & 63.8 & 164.0 & 5.23 & 51.2 & 76.4 & 7.21 \\
ICoT~\citep{gao2024interleaved} &  & 46.0 & 96.5 & 2.86 & 65.4 & 77.4 & 2.28 & -- & -- & -- \\
SCAFFOLD~\citep{lei2024scaffolding} &  & 44.9 & 170.8 & 5.14 & 62.5 & 162.3 & 4.91 & 48.7 & 72.8 & 6.72 \\
Chain-of-Focus~\citep{zhang2025chain} &  & 64.3 & 185.7 & 2.63 & 91.2 & 162.3 & 2.09 & 61.8 & 128.6 & 3.01 \\
IVT-LR$^\dagger$ &  & 69.8 & 10.0 & 0.67 & 92.8 & 11.0 & 0.81 & 65.8 & 10.1 & 0.68 \\
\textbf{\textit{Ours}} &
\multirow{-8}{*}[1pt]{\rotatebox{90}{\parbox{2.5cm}{\centering \fontsize{9pt}{11pt}\selectfont Qwen2-VL \\ 7B}}}
 & \textbf{73.0} & 7.0 & 0.86 & \textbf{95.9} & 7.2 & 1.02 & \textbf{67.4} & 9.2 & 0.82 \\

\midrule

No-CoT &  & 28.4 & -- & -- & 48.5 & -- & -- & -- & -- & -- \\
Multimodal CoT~\citep{zhang2023multimodal} &  & 30.6 & 110.5 & 3.62 & 50.7 & 98.7 & 3.33 & -- & -- & -- \\
CCoT~\citep{MitraCCoT} &  & 31.4 & 168.4 & 5.35 & 51.3 & 174.2 & 5.39 & 33.1 & 150.6 & 5.31 \\
ICoT~\citep{gao2024interleaved} &  & 32.3 & 110.9 & 5.43 & 53.4 & 92.4 & 4.62 & -- & -- & -- \\
SCAFFOLD~\citep{lei2024scaffolding} &  & 31.1 & 194.3 & 6.12 & 47.5 & 160.6 & 6.03 & 32.8 & 156.0 & 4.17 \\
Chain-of-Focus~\citep{zhang2025chain} &  & 36.5 & 739.4 & 3.09 & 61.2 & 717.1 & 2.56 & 34.6 & 360.4 & 2.98 \\
IVT-LR$^\dagger$ &  & 40.8 & 10.0 & 1.13 & 63.2 & 11.0 & 1.56 & 38.1 & 10.1 & 0.98 \\
\textbf{\textit{Ours}} &
\multirow{-8}{*}[1pt]{\rotatebox{90}{\parbox{2.5cm}{\centering \fontsize{9pt}{11pt}\selectfont Chameleon \\ 7B}}}
 & \textbf{43.4} & 7.0 & 1.24 & \textbf{65.7} & 7.2 & 1.37 & \textbf{39.4} & 9.2 & 1.21 \\

\bottomrule[1.5pt] 
\end{tabular}
}
\vspace{-1em}
\label{tab:comparison_1}
\end{table*}
\begin{table*}[htbp]
\centering
\setlength{\tabcolsep}{4.5pt}
\renewcommand{\arraystretch}{1.08}
\definecolor{lightblue}{RGB}{223,234,242}
\definecolor{lightpink}{RGB}{255,230,235}
\definecolor{graytxt}{RGB}{100,100,100}
\definecolor{categorygray}{RGB}{235,235,235}
\definecolor{groupheader}{RGB}{210,220,230}
\caption{Performance comparison of our method against baselines across four paradigms. Benchmarks are categorized into three domains: \textbf{Visual Perception}, \textbf{Compositional Reasoning}, and \textbf{Mathematical Reasoning}. Among latent reasoning approaches, the best results are highlighted in \textbf{bold}, and the second-best are \underline{underlined}. Our method is built upon the Qwen2.5-VL-7B backbone.}
\resizebox{\textwidth}{!}{%
\begin{tabular}{lcccccccccc}  
\toprule
\multirow{2}{*}{\textbf{Model}} 
& \multirow{2}{*}{\textbf{Data Size}} 
& \multicolumn{4}{c}{\textbf{Vision-centric Reasoning}} 
& \multicolumn{2}{c}{\textbf{Compositional}} 
& \multicolumn{3}{c}{\textbf{Mathematics Reasoning}} \\  
\cmidrule(lr){3-6} \cmidrule(lr){7-8} \cmidrule(lr){9-11}
& & MMVP & SeedBench-2-Plus & HallusionBench & HRBench
& BLINK & MMStar 
& MathVista & MathVision & MM-Math \\  
\midrule

\multicolumn{11}{c}{\textbf{\textit{Zero-Shot VLMs}}} \\  
GPT-4o~\cite{gpt4v}
& -- & 68.70 & 72.00 & -- & -- & 68.00 & 64.70 & 63.80 & 30.39 & 31.80 \\  

Qwen2.5-VL-7B~\cite{bai2025qwen25vl}
& -- & 65.67 & 65.31 & 56.57 & 68.25 & 53.60 & 59.70 & 68.20 & 25.60 & 37.50 \\  

LLaVA-OneVision~\cite{li2024llava}
& \textit{9M} & 74.00 & 61.22 & 51.10 & 63.00 & 49.34 & 59.13 & 58.60 & - & - \\  

InternVL3.5-8B~\cite{wang2025internvl3}
& \textit{70K} & 57.67 & 69.78 & 56.15 & 59.38 & 54.81 & 53.33 & 71.60 & 28.30 & - \\  

\midrule
\multicolumn{11}{c}{\textbf{\textit{Explicit CoT Reasoning}}} \\  
Multimodal CoT~\cite{zhang2023multimodal}
& -- & 68.10 & 54.11 & 63.60 & - & - & 57.90 & 56.40 & 21.80 & 35.60 \\  

CCoT~\cite{MitraCCoT}
& -- & 69.00 & 68.95 & 64.90 & - & - & 58.70 & 57.80 & 22.50 & 36.30 \\  

ICoT~\cite{gao2024interleaved}
& -- & 69.30 & 70.27 & 65.50 & - & - & 60.40 & 58.90 & 23.30 & 37.00 \\  

\midrule
\multicolumn{11}{c}{\textbf{\textit{Tool-use \& RL Enhanced Reasoning}}} \\  
PAPO~\cite{wang2025perception}
& \textit{39K} & 68.67 & 54.11 & 57.52 & 68.12 & 52.66 & 45.80 & 67.53 & - & - \\  

Vision-R1~\cite{huang2025vision}
& \textit{200K} & 72.67 & 68.95 & 63.83 & 75.12 & 52.71 & 62.67 & 52.40 & - & 40.20 \\  

VL-Rethinker~\cite{wang2025vl}
& \textit{39K} & 72.67 & 70.27 & 71.08 & 63.50 & 55.55 & 63.20 & 72.80 & 29.30 & - \\  

DeepEyes~\cite{zheng2025deepeyes}
& \textit{47K} & 70.00 & 69.08 & 62.57 & 69.12 & 51.08 & 58.73 & 70.10 & 26.60 & - \\  

\midrule
\multicolumn{11}{c}{\textbf{\textit{Latent Reasoning}}} \\  

LVR~\cite{li2025lvr}
& \textit{470K} & 64.00 & 47.39 & 65.19 & 53.62 & 53.60 & 57.93 & - & - & - \\  

Monet~\cite{wang2025monet}
& \textit{125K} & 68.00 & 65.88 & 56.36 & 68.00 & 50.71 & \underline{60.33} & - & - & - \\  

DMLR~\cite{liu2025reasoningminddynamicmultimodal}
& -- & 70.10 & \textbf{-} & 65.80 & - & \underline{56.92} & 60.27 & 59.10 & 24.40 & 38.80 \\  

Laser~\cite{laser2026forest}
& \textit{267K} & \underline{72.00} & \textbf{70.05} & \underline{67.72} & 72.50 & - & 60.10 & - & - & - \\  

\rowcolor{lightblue}
\textbf{Ours}
& \textit{30K} & \textbf{76.67} & \underline{66.86} & \textbf{68.63} & \textbf{74.21} & \textbf{57.96} & \textbf{60.82} & \textbf{69.80} & \textbf{25.89} & \textbf{39.82} \\  

\midrule

\end{tabular}
}

\label{tab:comparison_2}
\end{table*}

\begin{table*}[t]
    \vspace{-0.8em}
    \begin{minipage}{0.45\textwidth}
    \begin{center}
    \caption{We perform comprehensive ablation studies to analyze the contribution of each design choice. Experiments are conducted on three  CoT benchmarks with multiple backbone models. To save the training overhead, all models are trained for 16 epochs with a balanced sampled subset.}
    \label{tab:dp5k_refocus}   
    \setlength{\tabcolsep}{4pt}
    \scalebox{0.63}{
    \begin{tabular}{l c @{\hspace{12pt}} c @{\hspace{12pt}} c @{\hspace{12pt}} c}
    \toprule[1.5pt]
    \textbf{Method} & \textbf{Model} & 
    \textbf{M$^3$CoT} $\uparrow$ & 
    \textbf{ScienceQA} $\uparrow$ & 
    \textbf{GQA} $\uparrow$ \\
    \midrule
    
    Base & \multirow{3}{*}{\parbox{1.9cm}{\centering Qwen2-VL \\ 2B}}
           & 44.2 & 62.4 & 38.7 \\
    +RDS & 
           & 47.6 (\textcolor{red}{+3.4}) & 64.9 (\textcolor{red}{+2.5}) & 40.4 (\textcolor{red}{+1.7}) \\
    +RDS \& SCF-VR & 
           & 51.2 (\textcolor{red}{+7.0}) & 66.3 (\textcolor{red}{+3.9}) & 40.9 (\textcolor{red}{+2.2}) \\
    \midrule
    
    Base & \multirow{3}{*}{\parbox{1.9cm}{\centering Qwen2-VL \\ 7B}} 
           & 63.7 & 78.2 & 52.3 \\
    +RDS & 
           & 65.2 (\textcolor{red}{+2.5}) & 80.5 (\textcolor{red}{+2.3}) & 53.7 (\textcolor{red}{+1.4}) \\
    +RDS \& SCF-VR & 
           & 66.4 (\textcolor{red}{+3.6}) & 81.3 (\textcolor{red}{+3.1}) & 54.1 (\textcolor{red}{+1.8}) \\
    \midrule
    
    Base & \multirow{3}{*}{\parbox{1.9cm}{\centering Qwen2.5-VL \\ 3B}} 
           & 62.9 & 76.5 & 51.9 \\
    +RDS & 
           & 64.5 (\textcolor{red}{+1.6}) & 77.8 (\textcolor{red}{+1.3}) & 52.9 (\textcolor{red}{+1.0}) \\
    +RDS \& SCF-VR & 
           & 65.3 (\textcolor{red}{+2.4}) & 78.5 (\textcolor{red}{+2.0}) & 53.6 (\textcolor{red}{+1.7}) \\
    \midrule
    
    Base & \multirow{3}{*}{\parbox{1.9cm}{\centering Qwen2.5-VL \\ 7B}} 
           & 71.1 & 85.7 & 56.3 \\
    +RDS & 
           & 72.9 (\textcolor{red}{+1.8}) & 87.2 (\textcolor{red}{+1.5}) & 57.6 (\textcolor{red}{+1.3}) \\
    +RDS \& SCF-VR & 
           & 73.9 (\textcolor{red}{+2.8}) & 88.2 (\textcolor{red}{+2.5}) & 58.3 (\textcolor{red}{+2.0}) \\
    
    \bottomrule[1.5pt]
    \end{tabular}
    
    }   
    \end{center}     
    \end{minipage}
    \hspace{2em}
    \begin{minipage}{0.48\textwidth}
    \begin{center}
    \setlength{\tabcolsep}{3pt}
      \caption{\textbf{Ablation Experiments.} We provide ablation analysis of key parameters and experimental settings on three benchmarks. All the variants adopt Qwen2.5-VL-3B as the base model. \textbf{Base} refers the model that adopts RDS and SCF-VR.}
      \label{tab:ablation}
      \scalebox{0.65}{
    \begin{tabular}{lccc}
    \toprule
    Dataset & \textbf{M$^3$CoT} & \textbf{ScienceQA} & \textbf{GQA} \\
    \midrule
    \multicolumn{4}{l}{\bf (a) Curriculum Training} \\
    \midrule
    Base & 62.0 & 75.4 & 51.3 \\
    Base + Curr. & 62.9 \textcolor{red}{(+0.9)} & 76.5 \textcolor{red}{(+1.1)} & 51.9 \textcolor{red}{(+0.6)} \\
    \midrule
    \multicolumn{4}{l}{\bf (b) Token Selection in Router $T_{\alpha}(\mathbf{s})$} \\
    \midrule
    $\alpha = 32$ & 62.9 & 76.5 & 51.9 \\
    $\alpha = 16$ & 62.1 \textcolor{red}{(-0.8)} & 75.3 \textcolor{red}{(-1.2)} & 51.2 \textcolor{red}{(-0.7)} \\
    $\alpha = 64$ & 62.8 \textcolor{red}{(-0.1)} & 76.2 \textcolor{red}{(-0.3)} & 51.9 \textcolor{red}{(-0.0)} \\
    Cosine-annealed Retention & 62.6 \textcolor{red}{(-0.3)} & 76.5 \textcolor{red}{(-0.0)} & 51.7 \textcolor{red}{(-0.2)} \\
    \midrule
    \multicolumn{4}{l}{\bf (c) Selection Strategy} \\
    \midrule
    32-Patch & 62.9 & 76.5 & 51.9 \\
    16-Patch & 61.7 \textcolor{red}{(-1.2)} & 75.5 \textcolor{red}{(-1.0)} & 51.3 \textcolor{red}{(-0.6)} \\
    Soft-Mix & 62.2 \textcolor{red}{(-0.7)} & 75.8 \textcolor{red}{(-0.7)} & 51.9 \textcolor{red}{(-0.0)} \\
    \midrule
    \multicolumn{4}{l}{\bf (d) Position Encoding} \\
    \midrule
    Keep Old Position & 62.9 & 76.5 & 51.9 \\
    Rearrange Position & 62.2 \textcolor{red}{(-0.7)} & 76.5 \textcolor{red}{(-0.0)} & 51.3 \textcolor{red}{(-0.6)} \\
    \bottomrule
    \end{tabular}%
   }
    \end{center}
    \end{minipage}
    \vspace{-2em}
\end{table*}

\section{Experiments}
\label{sec:experiments}
\subsection{Experimental Setup} 
\paragraph{Training Dataset and Evaluation Benchmark.}
To facilitate reproducibility, we first detail the training data configuration. During the supervised fine-tuning (SFT) stage, we curate a subset of approximately 30K samples from OneThinker~\cite{DBLP:journals/corr/abs-2512-03043}, selected for its diverse distribution of reasoning chain lengths. The statistical distributions of CoT length, reasoning steps, and topic categories within this subset are illustrated in Fig.~\ref{fig:data_distribution}. We evaluate our method on three tasks across twelve benchmarks: (1) Mathematics Reasoning (MathVista~\cite{mathvista}, MathVision~\cite{mathvision}, MM-Math~\cite{mmmath}); (2) Vision-centric Reasoning (Hallusion-Bench~\cite{hallusionbench}, MMVP~\cite{mmvp}, Seed-Bench-2-Plus~\cite{seedbench2plus}, HR-Bench~\cite{hrbench}, GQA~\cite{gqa}); (3) Multimodal Composition Reasoning (MMStar~\cite{mmstar}, BLINK~\cite{blink}, ScienceQA~\cite{lu2022learn}, M$^{3}$CoT~\cite{chen-etal-2024-m3cot}). Additional descriptive details can be found in the Appendix.

\paragraph{Implementation Details.} 
To comprehensively evaluate the effectiveness and scalability of our approach, we instantiate our method across a diverse set of backbone architectures, including Qwen2-VL-2B/7B~\citep{wang2024qwen2}, Qwen2.5-VL-3B/7B~\citep{bai2025qwen25vl}, and Chameleon-7B~\citep{team2024chameleon}.

All frameworks employ eager attention mode to enable explicit access to internal attention maps. We set the number of latent reasoning tokens to $T_r = 4$, with $\alpha = 32$ salient regions injected per refinement iteration. To balance computational efficiency and reasoning accuracy, we cap the maximum refinement 
depth at $D = 1$. For the cosine annealing schedule, the region injection count decays from $\alpha_s = 64$ to $\alpha_e = 16$. Models are trained for 16 epochs 
by default.

We employ DeepSpeed ZeRO-2 optimization without CPU offloading, with a per-GPU batch size of 8. Training utilizes the Adam optimizer with a learning rate of $4 \times 10^{-5}$ and $\beta_1 = 0.9$. For the proposed self-distillation loss, we set the weighting coefficient to $\lambda = 1.0$ for 2B/3B-scale models and $\lambda = 0.2$ for 7B-scale models, respectively. Input images are resized to 512 $\times$ 512 resolution when training on OneThinker subset and GQA, and resized to 280 $\times$ 280 on M$^{3}$CoT and ScienceQA. All experiments are conducted on 16 NVIDIA H20 GPUs (96GB VRAM each). A single training run takes approximately 20 hours.

\begin{figure*}[htbp]  
    \centering
    \begin{subfigure}[b]{0.4\linewidth}
        \centering
        \includegraphics[width=\linewidth, height=3cm]{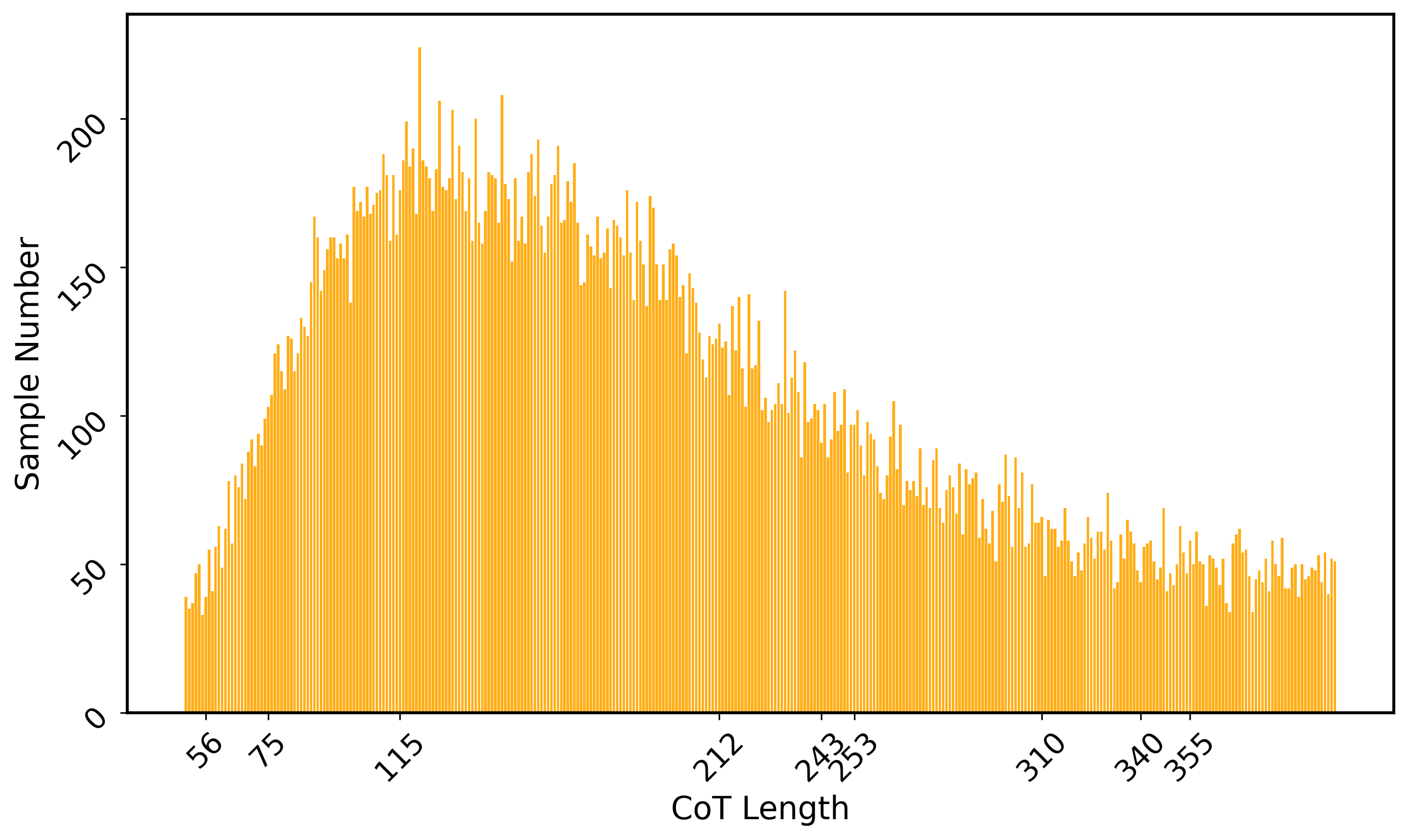}
        \caption{CoT Length Distribution}
        \label{fig:cot_len}
    \end{subfigure}
    \hfill
    \begin{subfigure}[b]{0.32\linewidth} 
        \centering
        \includegraphics[width=\linewidth, height=3cm]{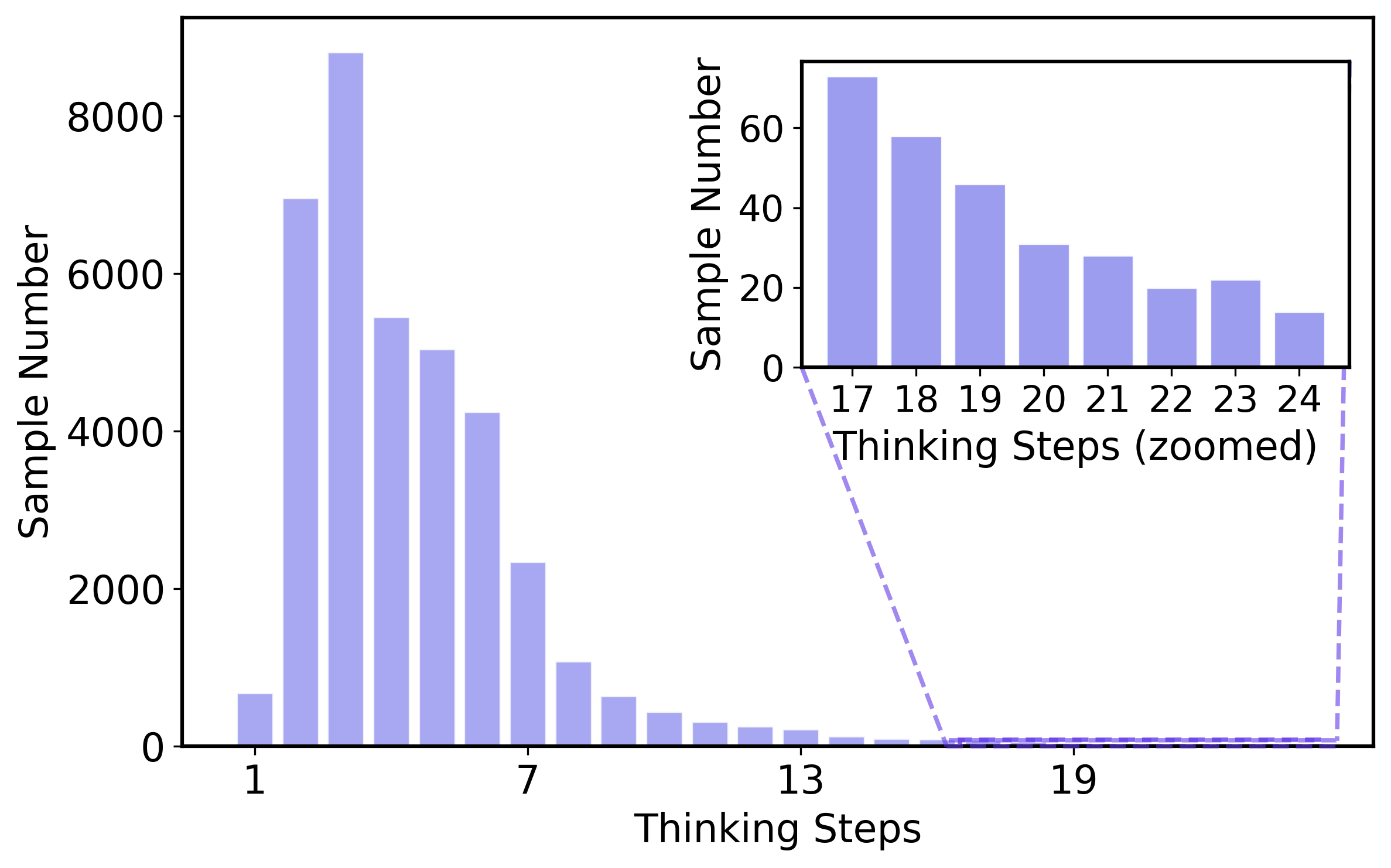}
        \caption{Thinking Step Distribution}
        \label{fig:step_len}
    \end{subfigure}%
    \hfill
    \begin{subfigure}[b]{0.26\linewidth}  
        \centering
        \includegraphics[width=\linewidth, height=3.3cm]{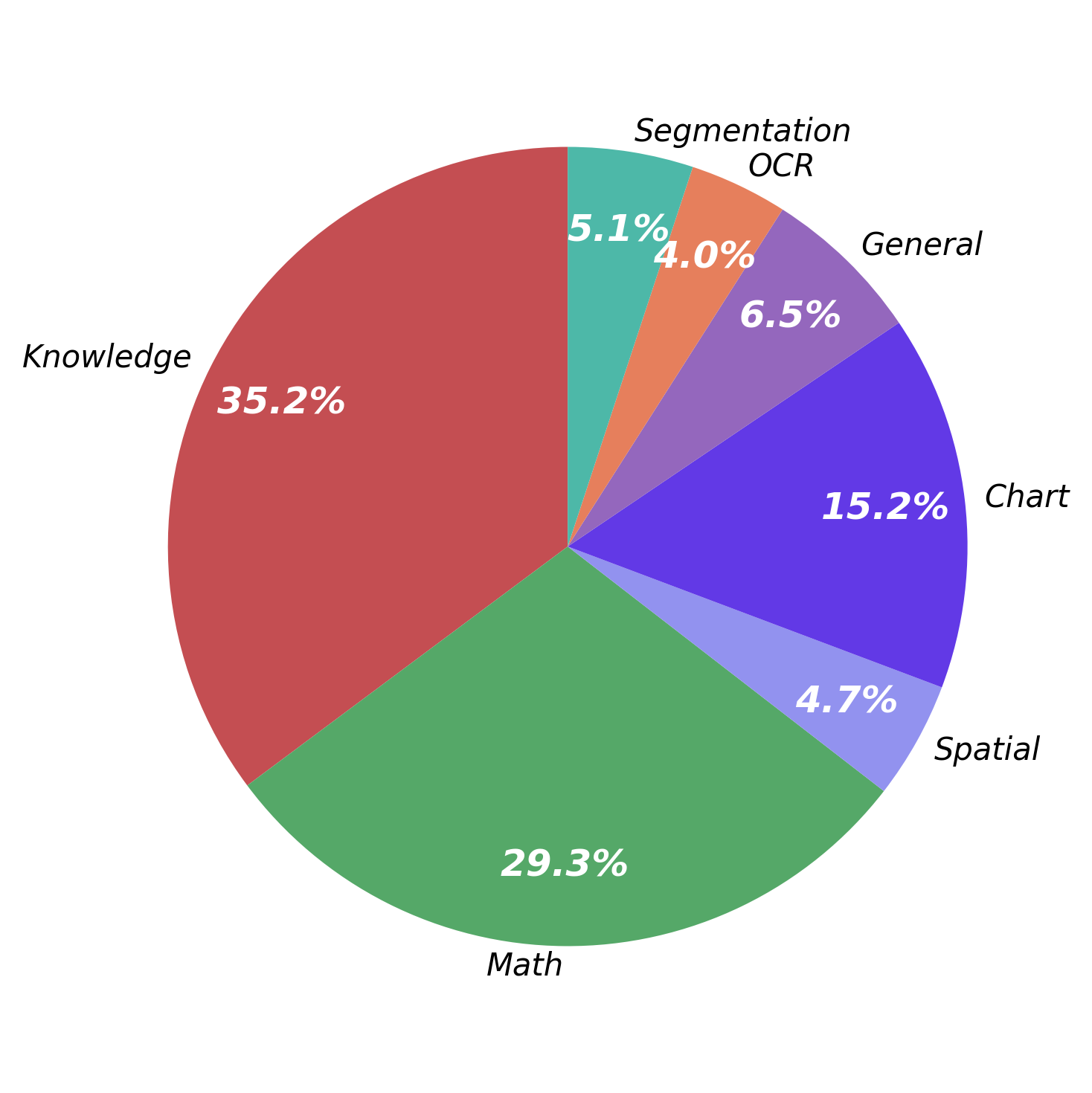}
        \caption{Topic Distribution}
        \label{fig:topic}
    \end{subfigure}%
    \caption{Details of sampled training distribution. Panel (a) depicts the sample distribution across different CoT lengths. The distribution exhibits a boundary at 125 tokens, with sample counts gradually decreasing toward both shorter and longer extremes. Panel (b) illustrates the distribution of reasoning steps in the original data. Specifically, we segment reasoning steps using $\texttt{\textbackslash n\textbackslash n}$ as delimiters; during training, we retain up to 4 delimiters to evenly partition sequence length, while samples containing fewer than 4 delimiters preserve their original segmentation. Panel (c) summarizes the topic-wise distribution of samples in the dataset.
    }
    \label{fig:data_distribution}
\end{figure*}

\paragraph{Baselines.} 

We evaluate our method against a comprehensive set of state-of-the-art baselines, which can be categorized into four paradigms: (1)
Zero-shot VLMs (GPT-4o~\cite{gpt4v}, LLaVA-OneVision~\cite{li2024llava}, InternVL3.5-8B~\cite{wang2025internvl35}, Qwen2.5-VL-7B~\cite{bai2025qwen25vl}), (2) Explicit CoT-based methods (SCAFFOLD~\citep{lei2024scaffolding}, ICoT~\citep{gao2024interleaved}, Multimodal-CoT~\citep{zhang2023multimodal}, CCoT~\citep{MitraCCoT}, Chain-of-Focus~\citep{zhang2025chain}), (3) Visual Enhanced methods, including tool-augmented reasoning (DeepEyes~\cite{zheng2025deepeyes}) and RL-enhanced VLM reasoning (Vision-R1~\cite{huang2025vision}, PAPO~\cite{wang2025perception}, VL-Rethinker~\cite{wang2025vl}), and (4) Multimodal Latent Reasoning approaches (Laser~\cite{laser2026forest}, LVR~\cite{li2025lvr}, Monet~\cite{wang2025monet}, DMRL~\cite{liu2025reasoningminddynamicmultimodal}).
\subsection{Overall Quantitative Results}
\paragraph{Reasoning Accuracy.}
As summarized in Table~\ref{tab:comparison_1}, trained with identical protocols, our method achieves the best results on three fine-grained CoT datasets across all backbones. Compared with explicit CoT baselines, it brings an average accuracy gain of nearly 30\% on Qwen-2-VL. It also improves over the strongest latent baseline (IVT-LR) by an average of +2.63\% across all benchmarks. As shown in Table~\ref{tab:comparison_2}, we further evaluate on mainstream diagnostic benchmarks. We observe consistent improvements on most tasks, with the largest gain on MMVP (+4.67\%), which requires fine-grained visual discrimination. These improvements are attributed to our RCF-VR, which enforces spatially coherent visual latents to reduce hallucination while preserving visual details. In contrast, gains are more modest on text-intensive benchmarks (e.g., SeedBench-2-Plus), likely due to limited domain-specific data. Notably, despite operating purely in the latent space without external retrieval or RL-based optimization, our approach achieves competitive performance against explicit CoT and tool-augmented methods through token-wise depth scaling.
\paragraph{Reasoning Efficiency.}
Beyond accuracy, our method significantly improves inference efficiency, measured by (1) \textbf{autoregressive steps} and (2) \textbf{wall-clock latency}. 
\textbf{(1) Autoregressive Steps.} Across all backbones, we reduce autoregressive generation steps by at least $10\times$ versus conventional baselines. This stems from performing compact multi-step reasoning in the latent space, avoiding verbose textual rationales required by explicit CoT. 
\textbf{(2) Inference Latency.} On the Qwen backbone, our method achieves $\sim$0.9s average latency, comparable to IVT-IR, while generating rationales $3$--$6\times$ faster than explicit CoT competitors. Similar trends hold for Chameleon. Although the No-CoT baseline is fastest (0.35s), it lacks strong multi-step reasoning. Overall, our approach reaches near this efficiency frontier, delivering state-of-the-art accuracy with only marginal latency overhead, enabling fast yet nuanced multimodal reasoning.

\subsection{Ablation Analysis}
\paragraph{Effect of Designed SCF-VR and RDS.} 
We introduce several model variants to validate the effectiveness of our proposed modules across four representative MLLM backbones. As summarized in Table~\ref{tab:ablation}, consistent performance improvements are observed across all architectures. Notably, when instantiated on Qwen2-VL-2B, our method achieves a substantial improvement of +7.0\% on M$^{3}$CoT. Furthermore, we observe that performance gains are less pronounced on the GQA benchmark. We hypothesize that this trend arises from the relatively straightforward nature of GQA, where both visual inputs and queries demand less complex reasoning. Consequently, baseline models can adequately address the task requirements, rendering the contributions of our designed modules less impactful in such scenarios.

\paragraph{Effect of Curriculum Training.}  
Curriculum training is critical to our method design. As evidenced in Table~\ref{tab:ablation}(a), incorporating the progressive training strategy yields a 0.9\% performance improvement over the baseline. This paradigm facilitates the gradual integration of latent representations into the optimization process, enabling each latent to progressively ground relevant contextual cues.

\subsection{Parametric Investigation}
\label{app:para}

\paragraph{Visual Latents Formation.} 
The \textit{Soft-Mix} strategy aggregates 32 selected patches into a single visual latent token via weighted summation, guided by reweighted attention scores. In contrast, the 
32-patch and 16-patch strategies directly utilize the top 32 and 16 patches with the highest attention scores as visual latents at each reasoning step, respectively. As summarized in Table~\ref{tab:ablation}, the 32-patch strategy yields superior performance. Notably, \textit{Soft-Mix} achieves performance comparable to 16-patch while utilizing only a single compact representation. We attribute this performance to the expanded search scope provided by the 32-patch strategy, which offers the router a richer pool of visual tokens, thereby effectively raising the upper bound of depth scaling capabilities.

\begin{wraptable}[13]{r}{0.4\textwidth}
  \centering
  \renewcommand{\arraystretch}{1.1}
  \caption{Performance comparison of different sub-grid window sizes on the \textbf{M$^3$CoT} benchmark. Here, $W$ denotes the sub-grid window size; $W=10$ is the window size of extracted visual features. We set $\lambda=1.0$ for the 2B/3B models and $\lambda=0.2$ for the 7B model.}
  \label{tab:window_size}
  \scalebox{0.6}{
      \begin{tabular}{llccc}
        \toprule[1.5pt]
        \multicolumn{2}{c}{} & \multicolumn{3}{c}{\textbf{M$^3$CoT} (Acc.\%)} \\
        \cmidrule(lr){3-5}
        \textbf{Method} & \textbf{Grid Size} & $W=2$ & $W=3$ & $W=5$ \\
        \midrule
        Qwen2-VL-2B   & \multirow{4}{*}{\centering\footnotesize 10$\times$10} & 50.8 & 51.2 & 50.3 \\
        Qwen2-VL-7B   &  & 65.7 & 66.4 & 65.2 \\
        Qwen2.5-VL-3B &  & 65.0 & 65.3 & 64.5 \\
        Qwen2.5-VL-7B &  & 73.4 & 73.9 & 72.1 \\
        \bottomrule[1.5pt]
      \end{tabular}
  }
\end{wraptable}

\paragraph{Impact of $\lambda$.} 
As shown in Fig.~\ref{fig:lambda}, model performance exhibits high sensitivity to variations in the hyperparameter $\lambda$. Notably, larger models (e.g., 7B) achieve peak performance at $\lambda=1.0$, whereas the 2B model attains optimal results at a substantially lower value of $\lambda=0.2$. We attribute this divergence to distinct capacity across model scales: smaller models are primarily bottlenecked by visual perception capabilities, thus requiring larger degree of visual play to establish effective visual grounding for downstream reasoning. In contrast, larger models possess sufficient grounding capability, and they benefit from a more balanced allocation between visual and linguistic signals, avoiding over-reliance on visual features that could otherwise suppress the development of complex reasoning chains.

\begin{wrapfigure}[9]{r}{0.4\linewidth}
\begin{center}
    \includegraphics[width=\linewidth]{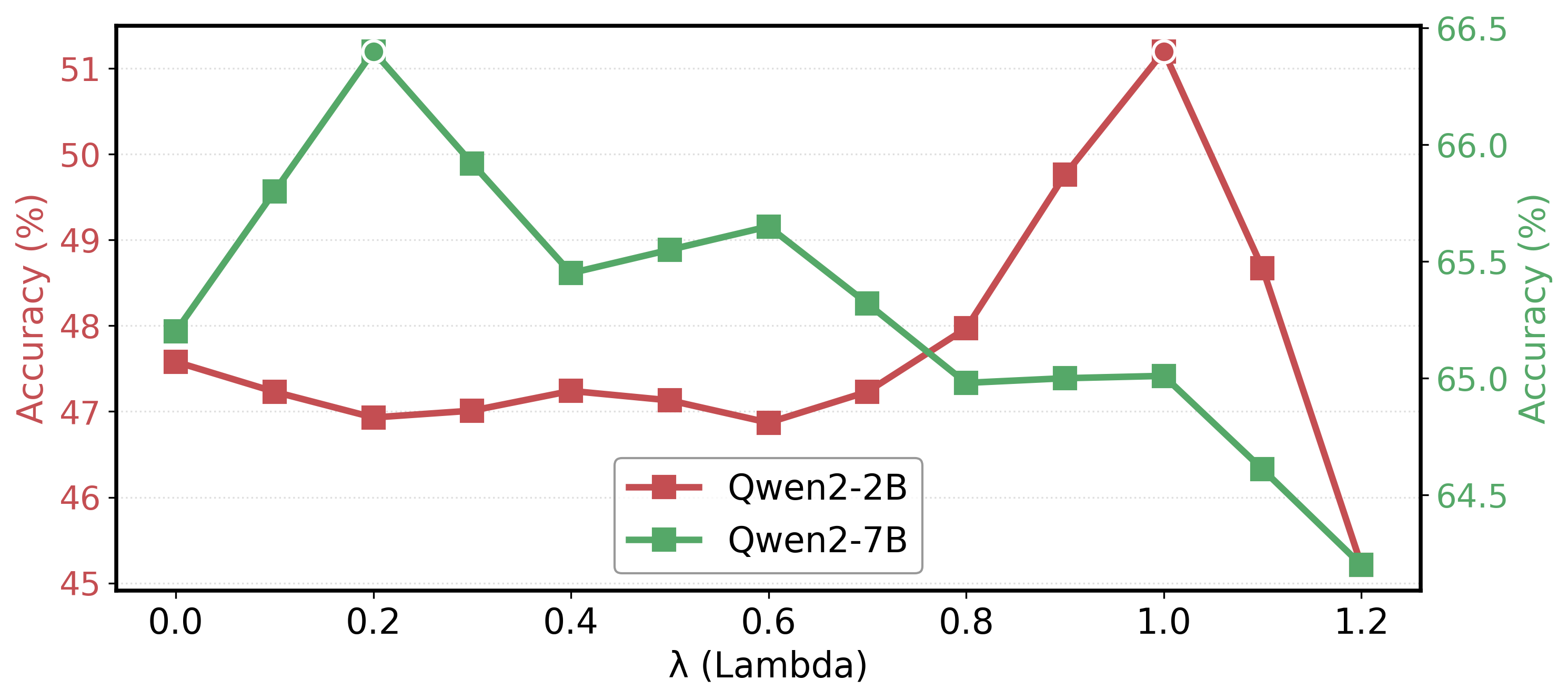}
    \end{center}
    \vspace{-1em}
    \caption{Sensitivity analysis of $\lambda$.}
    \label{fig:lambda}
\end{wrapfigure}

\paragraph{Impact of different window size $W$.} As shown in Table~\ref{tab:window_size}, $W$ = 3 achieves optimal performance. We reckon that smaller window sizes may fail to comprehensively cover salient visual regions, whereas larger window sizes, while providing broader contextual information, tend to introduce excessive visual noise.

\paragraph{Effect of Retention Parameter $\alpha$.} For the parameter $\alpha$ in $T_{\alpha}(\cdot)$, we compare different retention strategies. As illustrated in Table~\ref{tab:ablation}, we evaluate fixed top-$\alpha$ settings with $\alpha \in \{16, 32, 64\}$ and a \textbf{cosine-annealed token retention schedule} (CTR), which gradually reduces the token retention ratio layer by layer following a cosine schedule. Formally, for a model with $L$ layers, the retention ratio for the $l$-th layer is,
\begin{equation}
\alpha(l) =  \mathrm{round} \left( \alpha_{s} + \frac{\alpha_{s} - \alpha_{e}}{2} \left[ 1 + \cos \left( \frac{\pi l}{L} \right) \right] \right),
\end{equation}
where $\alpha_{s}$ and $\alpha_{e}$ are two endpoints that enables flexible control over computational cost. As can be seen panel (b) in Table~\ref{tab:ablation}, results show that fixed top-$\alpha$ with $\alpha=32$ achieves optimal performance, while cosine-annealed schedules yield consistently inferior results. We attribute this result to that: as visual tokens and latent hidden states are gradually appended to the token sequence, larger contextual scopes becomes essential for deeper token interaction. Consequently, strategies that progressively decrease the token retention ratio during depth scaling unavoidably limit these critical interactions, thereby hindering the model's capacity to develop comprehensive contextual understanding.

\begin{figure*}
    \centering
    \includegraphics[width=1.0\linewidth]{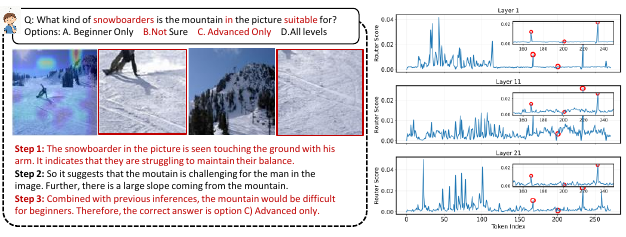}
    \vspace{-1em}
    \caption{Visualization of learned router scores and selected top-$K$ tokens for depth scaling, revealing which textual tokens, CoT latent tokens, and visual regions exhibit greater optimization difficulty.}
    \label{fig:router_case}
\end{figure*}

\subsection{In-depth Analysis}
\label{app:analysis}

\paragraph{Routing Behavior.} 
As illustrated in Fig.~\ref{fig:router_case}, the left panel highlights tokens selected by the learnable router (marked in red color). In our latent space reasoning framework, each reasoning step is compressed into a single latent token. The right panel displays the router scores for these latent tokens (circled in red), where the first and third latent token exhibit significantly higher scores, identifying them as \textit{difficult} tokens requiring additional depth scaling. Specifically, the first reasoning step analyzes critical visual details essential for the answer, such as the skier's arms touching the ground for balance, which indicates the difficulty level of the ski slope. Furthermore, the visualized attention scores in image reveal that the router prioritizes regions corresponding to snow, body limbs, and trees. This demonstrates that the model achieves deeper compositional reasoning by iteratively refining these semantic hubs.
\paragraph{Visualization of Crop Region.}  
As illustrated in Fig.~\ref{fig:crop_case}, we visualize the attention-guided crops across successive reasoning steps. Taking the right-upper case as a representative example, which queries the purpose of the train cart, the model initially localizes the region corresponding to the description ``shape like a house'' during the first reasoning step. In subsequent steps, the attention focus progressively shifts toward the ``entrance door''. Remarkably, without any explicit fine-grained supervision, our curriculum training paradigm enables latent tokens to progressively capture logical dependencies and establish robust visual-semantic alignments. This empirically demonstrates the model's capacity to jointly perform multi-step visual grounding and generate coherent reasoning chains.

\begin{figure}[t]
    \centering
    \includegraphics[width=1.0\linewidth]{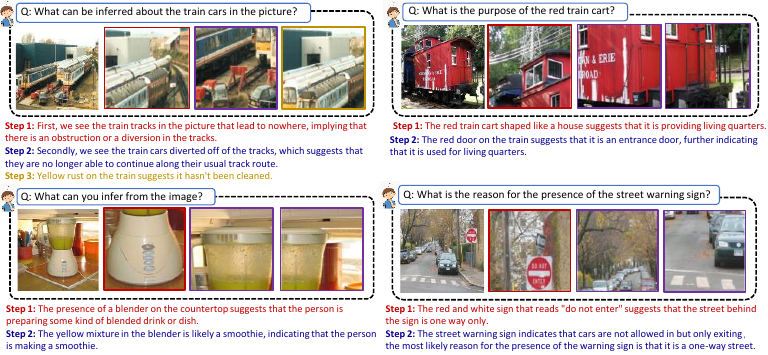}
    \vspace{-1em}
    \caption{Visualization of cropped region in each latent reasoning step.}
    \label{fig:crop_case}
\end{figure}

\begin{figure}[h!] 
    \centering 

    \begin{minipage}{0.49\linewidth} 
        \centering
        \begin{subfigure}[b]{0.48\linewidth}
            \centering
            \includegraphics[width=\linewidth, height=2.7cm]{asset/hard.png}
            \caption{Without RDS}
            \label{fig:rds_without}
        \end{subfigure}%
        \hfill 
        \begin{subfigure}[b]{0.48\linewidth}
            \centering
            \includegraphics[width=\linewidth, height=2.7cm]{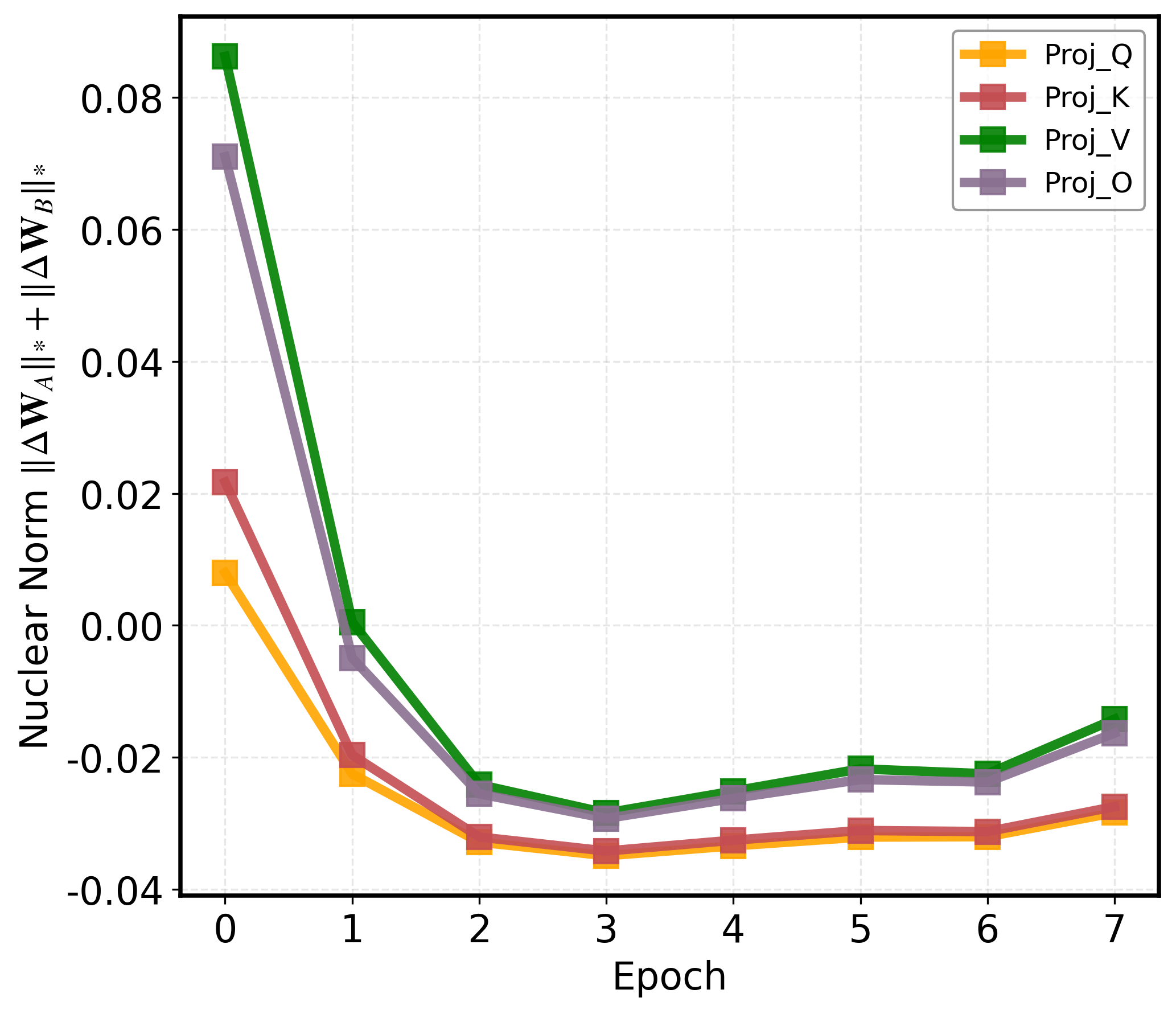}
            \caption{With RDS}
            \label{fig:rds_with}
        \end{subfigure}
        \caption{(a) and (b) illustrate the Nuclear norm variance without RDS and with RDS.}
        \label{fig:effect_w_rds}
    \end{minipage}
    \hfill 
    \begin{minipage}{0.49\linewidth} 
        \centering
        \begin{subfigure}[b]{0.48\linewidth}
            \centering
            \includegraphics[width=\linewidth, height=2.7cm]{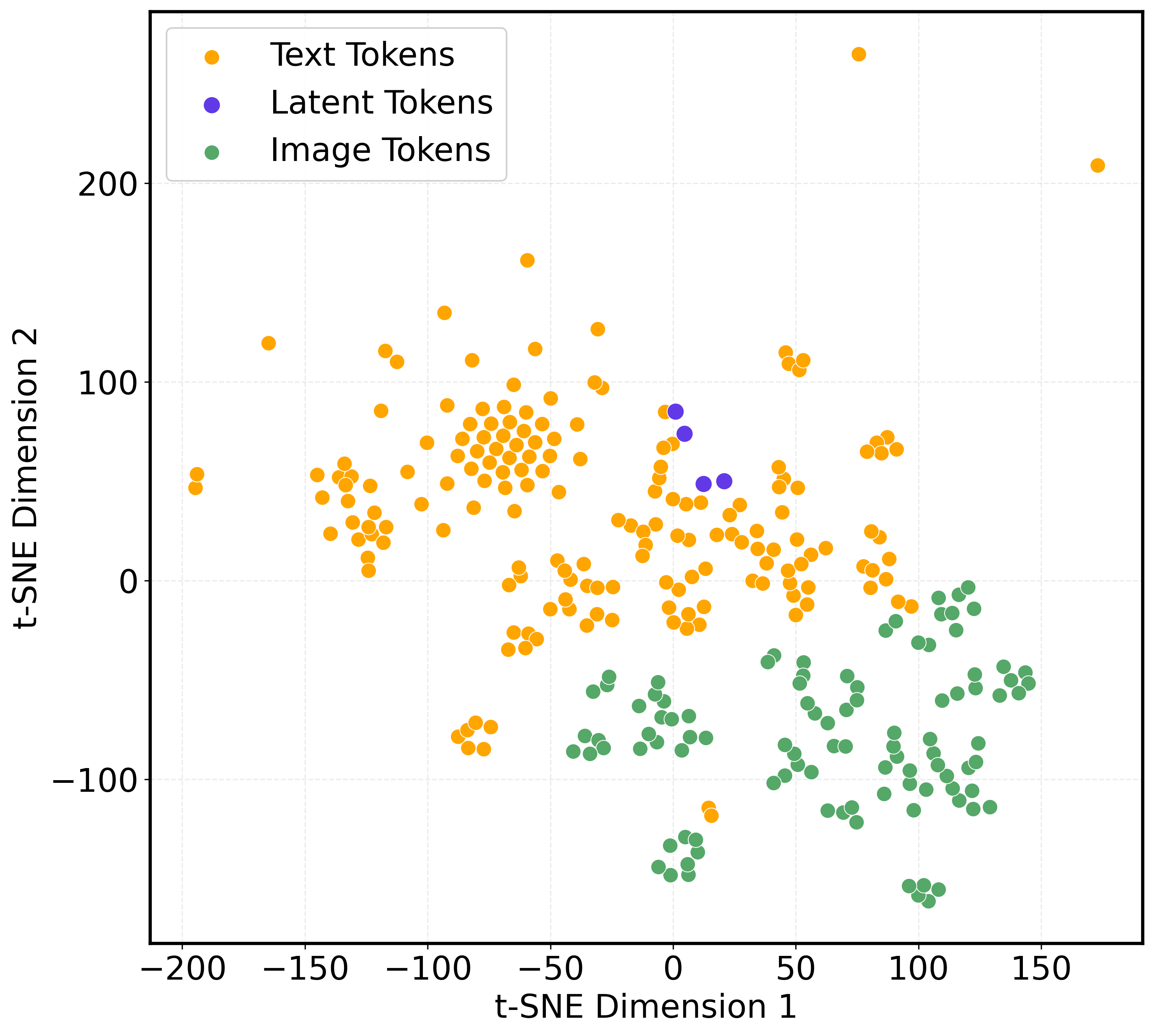}
            \caption{Vanilla}
            \label{fig:tsne_vanilla}
        \end{subfigure}%
        \hfill
        \begin{subfigure}[b]{0.48\linewidth}
            \centering
            \includegraphics[width=\linewidth, height=2.7cm]{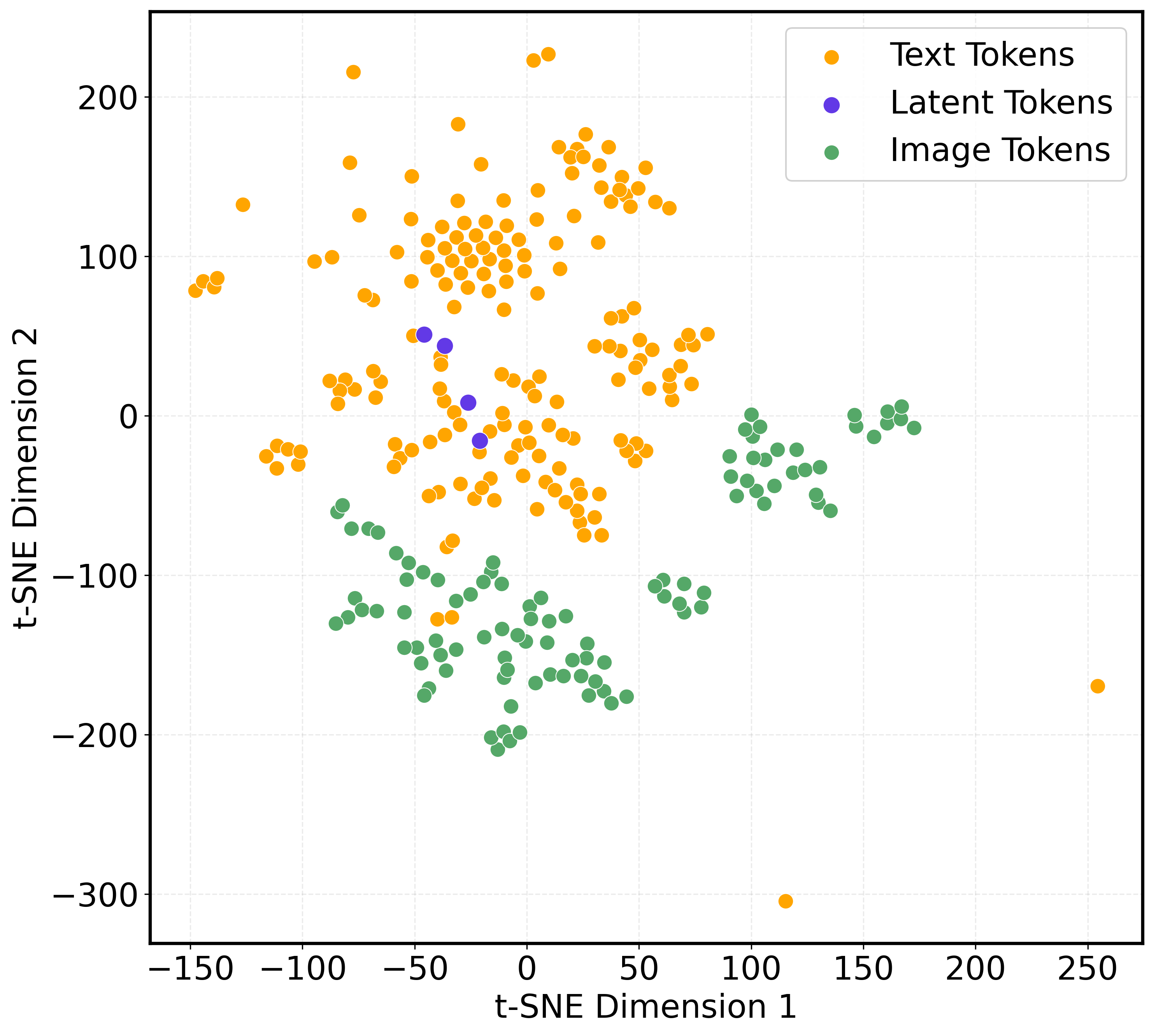}
            \caption{Ours}
            \label{fig:tsne_ours}
        \end{subfigure}
        \caption{(a) and (b) illustrate the t-SNE distribution of the learned latents, visual tokens, and text tokens.}
        \label{fig:tsne}
    \end{minipage}

\end{figure}
\paragraph{Nuclear Norm Behavior with RDS.} 
As illustrated in Fig.\ref{fig:effect_w_rds}\subref{fig:rds_without} and Fig.\ref{fig:effect_w_rds}\subref{fig:rds_with}, we conducted training experiments on a subset of hard samples. The results demonstrate that after incorporating the RDS module, the cumulative nuclear norm of the QKVO projection weights achieves stable convergence by the 3rd epoch. This observation suggests that the depth scaling strategy effectively mitigates gradient volatility and uncertainty induced by hard samples, thereby substantially enhancing the model's capacity to learn and fit challenging instances and critical difficult tokens.

\paragraph{Latent Behavior Analysis.}  
As depicted in Fig.~\ref{fig:tsne}\subref{fig:tsne_vanilla} and Fig.~\ref{fig:tsne}\subref{fig:tsne_ours}, we visualize the embedding space distribution of multimodal features for both the baseline and our method. The latent tokens learned by our approach form several distinct clusters, situated centrally within the text embedding manifold. Notably, compared to the baseline, our latents exhibit closer proximity to visual embeddings. This observation suggests that our latent tokens encapsulate richer reasoning semantics while facilitating deeper integration of visual information.

\section{Conclusion}
\label{sec:conclusion}

In this paper, we empirically reveal two critical observations: the vision-text optimization disparity and the fixed-depth optimization dilemma. In light of these findings, we propose a 
visual replay module and routing depth scaling to collaboratively enhance visual perception and exploit critical tokens for deeper contextual reasoning. Instead of relying on extensive latent-supervised annotations or predefined visual priors, we adopt curriculum training to progressively capture rich contextual information, yielding highly informative latent representations.
Extensive experiments demonstrate that: (1) our approach not only reduces decoding latency but also achieves superior performance across diverse benchmarks; (2) the proposed components exhibit strong generalization across various backbones and tasks; and (3) visualization analysis reveals both quantitative and qualitative gains afforded by enhanced visual grounding and adaptive token depth scaling. In future work, we plan to extend this framework to more complex reasoning scenarios, such as long-term video understanding, and scale it to larger model architectures.

{
\bibliographystyle{plainnat}
\bibliography{mainref_nips26}
}

\clearpage
\appendix 
\section*{Abstract of Appendix} This appendix provides the \textbf{Proof of Theorem} in Eq.~\ref{eq:bound} (Appendix~\ref{proof}), \textbf{Baseline Details} (Appendix~\ref{app:baseline}), \textbf{Benchmark Details} (Appendix~\ref{app:benchmark}), and \textbf{Limitation} (Appendix~\ref{app:limitation}).

\section{Proof of Theorem~\ref{thm:main}}
\label{proof}

\subsection{Problem Setup and Notation: What Are We Analyzing?}

\subsubsection{The Two Architectures}

\paragraph{Standard Transformer ($\mathcal{T}_{\text{std}}$).} Every token goes through exactly $L$ layers. Each layer applies self-attention over \emph{all} $n$ tokens and FFN transformations.

\paragraph{Token-wise Depth Scaling ($\mathcal{T}_{\text{topk}}$).} 
\begin{itemize}
    \item Odd layers ($1, 3, 5, \ldots$): ``Modified layers'' where we can apply \emph{double computation} to some tokens
    \item Even layers ($2, 4, 6, \ldots$): Standard layers, same as $\mathcal{T}_{\text{std}}$
    \item In each modified layer: route network selects $K = \rho n$ ``hard'' tokens for a second pass through the \emph{same} layer parameters. Notbly, we don't add new parameters for the second pass—we reuse the same weights. Only the routing network adds $Md$ extra parameters where $M = L/2$, which is negligible ($< 0.06\%$ for $d \geq 768$).
\end{itemize}

\subsection{Why This Might Help: Two Intuitions}

\paragraph{Adaptive Compute Allocation.}
Not all tokens are equally difficult. A token like ``the'' in ``the cat sat'' is easy; a token requiring multi-hop reasoning is hard. By applying extra computation only to hard tokens, we use our parameter budget more efficiently.
\paragraph{Sparse Attention Benefit.}
When we recompute only $K$ tokens, their attention is $K \times K$ instead of $n \times n$. This reduces the ``effective Lipschitz constant'' of the transformation, improving generalization.

\subsection{Assumptions: What Do We Need?}

\begin{assumption}[Weight Boundedness]
\label{ass:weight-bounded}
All weight matrices have bounded operator norm: $\|W_Q^{(l)}\|_{\text{op}}, \|W_K^{(l)}\|_{\text{op}}, \|W_V^{(l)}\|_{\text{op}}, \|W_O^{(l)}\|_{\text{op}} \leq B$, and $\|W_1^{(l)}\|_{\text{op}} \leq B_1, \|W_2^{(l)}\|_{\text{op}} \leq B_2$ with $B_1B_2 \leq B^2$, plus $\|w_g^{(l)}\|_2 \leq B_g$ for routing, where $\|W\|_{\text{op}} = \sigma_{max}(W)$. 
\end{assumption}

\begin{assumption}[Bounded Representations]
\label{ass:input-bounded}
$\|X^{(1)}\|_F \leq R$, and all intermediate representations satisfy $\|\tilde{X}^{(l)}\|_F \leq R_{\ln}, \|\hat{X}_{S^{(l)}}^{(l)}\|_F \leq R_{\ln}$. LayerNorm keeps things bounded automatically.
\end{assumption}

\begin{assumption}[Iterative Improvement Exists]
\label{ass:improvement}
There exists a set $\mathcal{G}$ of ``improvable'' $(\tilde{x}, y)$ pairs where second-pass computation helps:
\begin{equation}
\mathcal{G} := \left\{(\tilde{x}, y) : \mathbb{E}[\ell(\hat{x}, y) - \ell(\tilde{x}, y) \mid \tilde{x}, y] \leq -\delta_0\right\},
\end{equation}
with $\gamma_{\text{eff}} := \mathbb{P}[(\tilde{x},y) \in \mathcal{G}] > 0$ and stability under small perturbations. This is our \emph{weakest} and most crucial assumption. We only assume: (1) some tokens can benefit from extra compute, and (2) this property is stable. We do NOT assume convexity or any specific loss shape.
\end{assumption}

\begin{assumption}[Oracle Routing Exists]
\label{ass:routing-oracle}
There exists an ideal (possibly unknown) linear classifier $(w_g^*, \tau^*)$ that can identify improvable tokens with accuracy $\geq 1-\epsilon_{\text{oracle}}$.
\end{assumption}

\begin{assumption}[Learned Routing Approximates Oracle]
\label{ass:topk-approx}
The trained routing satisfies $\mathbb{E}[|S^{(l)} \triangle S^*|/K] \leq \epsilon_{\text{select}} = \tilde{\mathcal{O}}(\sqrt{d/N})$, where $\triangle$ denotes the union but not in the intersection of two token sets, and $\epsilon_{\text{select}}$is the selection error rate. With $N$ samples, we learn to approximate the oracle. The $\sqrt{d/N}$ rate is standard for $d$-dimensional linear classification.
\end{assumption}

\begin{assumption}[Regularity]
\label{ass:regularity}
$K = \rho n$ for fixed $\rho \in (0,1)$, where $\rho$ is the selection ratio, and $\gamma_{\text{eff}}(1-\epsilon_{\text{oracle}})(1-\epsilon_{\text{select}}) \geq \gamma_{\min} > 0$.
\end{assumption}

\subsection{Main Theorem}

\begin{theorem}[Generalization Analysis of Token-wise Depth Scaling]
\label{thm:main}
Let $W_{\text{std}} = 12Ld^2$ and $W_{\text{topk}} = W_{\text{std}} + Md = W_{\text{std}}(1+O(1/d))$. Under \textit{Assumptions~\ref{ass:weight-bounded}--\ref{ass:regularity}}, with probability $\geq 1-\delta$:
\begin{equation}
\mathcal{L}_{\mathcal{D}}(f_{\text{topk}}) \leq \mathcal{L}_{\mathcal{D}}(f_{\text{std}}) - \underbrace{\frac{\delta_0 K L}{4n}\eta_{\text{eff}}}_{\text{adaptive gain}} + \underbrace{\tilde{\mathcal{O}}\left(\sqrt{\frac{W_{\text{topk}}\log(N/\delta)}{N}}\right)}_{\text{generalization cost}} - \underbrace{\Omega\left(\frac{K^2L}{n^2}\sqrt{\frac{W_{\text{std}}}{N}}\right)}_{\text{complexity benefit}},
\end{equation}
where $\eta_{\text{eff}} = \gamma_{\text{eff}}(1-\epsilon_{\text{oracle}})(1-\epsilon_{\text{select}})$ is the \emph{effective selection rate}.
\end{theorem}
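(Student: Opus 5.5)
We decompose the population risk of each architecture as empirical risk plus a generalization gap, and compare the two decompositions term by term:
\[
\mathcal{L}_{\mathcal{D}}(f) = \widehat{\mathcal{L}}_N(f) + \big(\mathcal{L}_{\mathcal{D}}(f)-\widehat{\mathcal{L}}_N(f)\big).
\]
The adaptive-gain term should come from the empirical (really, expected) risk difference. The generalization-cost and complexity-benefit terms should come from bounding the gap via a Lipschitz-based covering-number or Rademacher argument over the $L$-layer composition.

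\textbf{Adaptive gain.} We couple the two models: $f_{\text{topk}}$ is constructed from $f_{\text{std}}$ by keeping the same weights and adding the routers, so every token not selected follows an identical path. In each of the $M=\lceil L/2\rceil$ modified layers, a selected token $i\in S^{(l)}$ receives the second-pass representation $\hat x$ instead of $\tilde x$. By Assumption~\ref{ass:improvement}, conditioned on $(\tilde x,y)\in\mathcal{G}$, the loss drops by at least $\delta_0$ in expectation.

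The selected set overlaps $\mathcal{G}$ in expectation on a fraction at least $\gamma_{\text{eff}}(1-\epsilon_{\text{oracle}})(1-\epsilon_{\text{select}})=\eta_{\text{eff}}$ of the $K$ slots. This uses Assumptions~\ref{ass:routing-oracle} and \ref{ass:topk-approx}, with the symmetric-difference bound controlling misrouted tokens. Averaging over the $n$ token positions gives a per-layer gain of $\delta_0 \eta_{\text{eff}} K/n$. Summing over $M\ge L/2$ layers gives $\delta_0\eta_{\text{eff}}KL/(2n)$.

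We then lose another factor of $2$ to absorb two effects: the non-improvable selected tokens (whose harm we bound using stability under perturbation and Lipschitzness of later layers), and the propagation of improvements through subsequent layers. This yields the stated $\tfrac{\delta_0KL}{4n}\eta_{\text{eff}}$.

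\textbf{Generalization cost and complexity benefit.} Using Assumptions~\ref{ass:weight-bounded}--\ref{ass:input-bounded}, we bound the per-layer Lipschitz constant of attention plus FFN in terms of $B$ and $R_{\ln}$. A standard parameter-counting covering bound then gives a gap of order $\tilde{\mathcal O}(\sqrt{W\log(N/\delta)/N})$ for either model.

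For $\mathcal{T}_{\text{topk}}$, the second pass restricts attention to a $K\times K$ submatrix. We would show that the Lipschitz and covering contribution of the re-entered sub-block scales with $(K/n)^2$ relative to the full $n\times n$ attention. Summing over the $M$ modified layers, the topk gap is then smaller than a naive ``twice-applied layer'' bound by $\Omega(\tfrac{K^2L}{n^2}\sqrt{W_{\text{std}}/N})$. The router's extra $Md$ parameters contribute only the $O(1/d)$ inflation in $W_{\text{topk}}$, which is absorbed into the $\tilde{\mathcal O}$ term. We subtract the standard model's own gap (a lower-order term that is absorbed) and apply a union bound with failure probability $\delta/2$ for each model.

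\textbf{Main obstacle.} The hardest step is making the complexity benefit a genuine \emph{negative} $\Omega(\cdot)$ term. Upper bounds on generalization gaps do not subtract, so the claim needs a comparison inequality between the two models' gaps on the coupled hypothesis classes, not two separate upper bounds. Our first approach would be to write the topk class as the std class composed with a perturbation operator whose covering number is explicitly controlled by $K^2/n^2$. We would then argue that the gap difference is bounded by the Rademacher complexity of the perturbation class with a sign-preserving coupling. If that fails, we would state the term as the difference between the two Lipschitz-based upper bounds, which is what the inequality effectively compares.
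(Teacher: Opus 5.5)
Your outline follows the paper's proof almost step for step. It uses the same risk decomposition, the same overlap count times $\delta_0/n$ summed over the $L/2$ modified layers with a factor-$2$ slack, the same parameter-counting Dudley bound, and the same sparse-attention heuristic. The step you flag as the main obstacle is a real gap, and the paper's proof does not close it either.

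\textbf{The complexity-benefit term.} The paper does essentially your fallback. It upper-bounds $\mathfrak{R}_N(\mathcal{F}_{\text{topk}})$ with an extra factor $\exp(-c_0K^2L/n^2)$. It asserts a lower bound of the same shape for $\mathfrak{R}_N(\mathcal{F}_{\text{std}})$ without that factor. It then uses the difference of the two complexities in place of $\text{Gap}_{\text{std}}-\text{Gap}_{\text{topk}}$. That substitution is not valid: the realized gap of the trained $f_{\text{std}}$ is not bounded below by the Rademacher complexity of its class, and it can be zero or negative. So neither the paper's argument nor your fallback yields a genuine negative term.

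The ingredient you plan to ``show'' is also unavailable, for three reasons.
\begin{itemize}
\item The paper gets the per-layer factor $L_{\text{lip}}(1-\rho+\rho^2L_{\text{lip}})$ by weighting per-token constants by the token fractions $1-\rho$ and $\rho$, and further scaling the second-pass constant by $\rho$. That is not a Lipschitz bound: a Lipschitz constant is a supremum, and a perturbation on a selected token passes through both applications of the layer.
\item With hard Top-$K$ routing the modified layer is not even continuous. A token crossing the selection threshold jumps from $\tilde x$ to $\hat x$.
\item In a parameter-counting covering bound a Lipschitz constant enters only inside the logarithm. It could not produce a multiplicative $\exp(-c_0K^2L/n^2)$ in any case.
\end{itemize}

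\textbf{The adaptive-gain term.} Two further steps fail as written.

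First, the symmetric-difference bound does not give an overlap fraction $\eta_{\text{eff}}$. It gives $\mathbb{E}|S^{(l)}\cap S^*_{\text{eff}}|\ge K\gamma_{\text{eff}}(1-\epsilon_{\text{oracle}})-K\epsilon_{\text{select}}/2$, because in the worst case the slots of $S^*$ that the learned router misses are exactly improvable ones. This is guaranteed to be at least $K\eta_{\text{eff}}$ only when $\gamma_{\text{eff}}(1-\epsilon_{\text{oracle}})\ge 1/2$. The paper keeps this additive form and carries an $\mathcal{O}(KL\epsilon_{\text{select}}/n)$ error term to the end, which is the right bookkeeping.

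Second, the harm from selected tokens outside $\mathcal{G}$ cannot be absorbed into a factor of $2$. Assumption~\ref{ass:improvement} only says that off $\mathcal{G}$ the expected change exceeds $-\delta_0$. That bounds the change from below and puts no limit on how much the second pass can increase the loss, and no assumption makes $\ell$ Lipschitz in the representation. Even granting an $O(1)$ harm per token, up to $K(1-\eta_{\text{eff}})$ such tokens contribute order $K(1-\eta_{\text{eff}})/n$, which can swamp $\delta_0K\eta_{\text{eff}}/n$.

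The paper has the same hole: it discards this term because it is non-negative, which is the wrong direction for a lower bound on the improvement. It also shares your unjustified additivity of per-layer intermediate-loss improvements across layers, which you absorb as ``propagation''. Making this stage rigorous needs extra assumptions, e.g.\ that the second pass never increases the expected loss off $\mathcal{G}$, and that a per-layer improvement survives to the output loss.
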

Three terms: (1) NEGATIVE = we do better on training due to adaptive compute; (2) POSITIVE = we pay slightly more in generalization due to extra routing parameters; (3) NEGATIVE = we gain from reduced complexity due to sparse attention in the second depth scaling. When (1)+(3) dominate (2), we win overall.

\subsection{Complete Proof with Detailed Explanation}

\subsubsection{Stage I: How Much Does Training Loss Improve?}

\begin{lemma}[Single Token Benefits from Second Pass]
\label{lem:single-improve}
For any $(\tilde{x}, y) \in \mathcal{G}$: $\mathbb{E}[\ell(\hat{x}, y) - \ell(\tilde{x}, y)] \leq -\delta_0$.
\end{lemma}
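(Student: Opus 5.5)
This lemma is essentially a restatement of Assumption~\ref{ass:improvement}, so the plan is to unfold the definition of $\mathcal{G}$ and be careful about which randomness the expectation averages over. By definition,
\[
\mathcal{G} = \left\{(\tilde{x}, y) : \mathbb{E}[\ell(\hat{x}, y) - \ell(\tilde{x}, y) \mid \tilde{x}, y] \leq -\delta_0\right\}.
\]
Here $\tilde{x}$ is the first-pass representation of the token after the modified layer. The quantity $\hat{x}$ is the representation after the second pass through the same layer parameters, restricted to the selected set $S^{(l)}$ with the $K\times K$ sparse attention. The conditional expectation is over whatever randomness remains once $(\tilde{x},y)$ is fixed, namely the context of the other selected tokens and any stochasticity in routing or dropout.

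First I fix $(\tilde{x},y)\in\mathcal{G}$ and identify the expectation in the lemma with this conditional expectation. Conditioning on a fixed pair is exactly the event $\{\tilde{x},y\}$, so membership in $\mathcal{G}$ gives the bound $-\delta_0$ at once.

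To be safe, I will also check that $\ell(\hat{x},y)$ and $\ell(\tilde{x},y)$ are integrable, so the conditional expectation is well defined and linearity applies. This follows from Assumption~\ref{ass:input-bounded}: $\|\tilde{X}^{(l)}\|_F$ and $\|\hat{X}^{(l)}_{S^{(l)}}\|_F$ are both at most $R_{\ln}$. Combined with the operator-norm bounds of Assumption~\ref{ass:weight-bounded}, the logits are bounded, so the loss is bounded on this set.

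The only real subtlety, and the point I expect to need care, is the ``stability under small perturbations'' clause. The lemma will later be applied to the learned selection $S^{(l)}$ rather than the oracle set $S^*$. Should the second-pass context differ slightly from the one under which $\mathcal{G}$ was defined, I would invoke that stability to say membership, and hence the $-\delta_0$ gap, persists. I would keep this remark separate, since the lemma as stated needs only the definition.
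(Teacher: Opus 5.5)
Your proposal is correct and matches the paper's proof, which just notes that the claim is immediate from the definition of $\mathcal{G}$ in Assumption~\ref{ass:improvement}, with the expectation read as conditional on the fixed pair $(\tilde{x},y)$. Your integrability and stability remarks are not needed: the definition of $\mathcal{G}$ already presupposes that this conditional expectation is well defined.
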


\begin{proof}
Immediate from definition of $\mathcal{G}$: these are exactly the tokens where second pass improves loss by at least $\delta_0$.
\end{proof}

\begin{lemma}[How Many Good Tokens Do We Actually Select?]
\label{lem:coverage}
Let $S_{\text{eff}}^* = \{i : (\tilde{x}_i, y_i) \in \mathcal{G}\}$ be the truly improvable tokens. Then:
\begin{equation}
\mathbb{E}[|S^{(l)} \cap S_{\text{eff}}^*|] \geq K \cdot \eta_{\text{eff}} - \frac{K\epsilon_{\text{select}}}{2}.
\end{equation}
\end{lemma}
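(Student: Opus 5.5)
We bound the number of selected improvable tokens by comparing the learned selection $S^{(l)}$ against the oracle selection $S^*$ from Assumption~\ref{ass:routing-oracle}. The basic decomposition is
\begin{equation}
|S^{(l)} \cap S_{\text{eff}}^*| \;\ge\; |S^* \cap S_{\text{eff}}^*| - |S^* \setminus S^{(l)}|,
\end{equation}
since every oracle-selected improvable token that the learned router also picks lies in the left-hand side. The argument then splits into two expectations: a lower bound for the oracle term and an upper bound for the loss term.

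\textbf{Oracle term.} We lower-bound $\mathbb{E}[|S^* \cap S_{\text{eff}}^*|]$ by linearity of expectation over the $K$ oracle slots. Each token is improvable with probability $\gamma_{\text{eff}}$ (Assumption~\ref{ass:improvement}). The oracle classifier labels it correctly with probability at least $1-\epsilon_{\text{oracle}}$. We interpret Assumption~\ref{ass:regularity} as saying that the $K$ oracle slots are filled so that each slot holds an improvable token with probability at least $\gamma_{\text{eff}}(1-\epsilon_{\text{oracle}})$. This gives
\begin{equation}
\mathbb{E}[|S^* \cap S_{\text{eff}}^*|] \ge K\gamma_{\text{eff}}(1-\epsilon_{\text{oracle}}).
\end{equation}

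\textbf{Loss term.} We control $\mathbb{E}[|S^*\setminus S^{(l)}|]$ using the cardinality constraint. Both $S^{(l)}$ and $S^*$ are Top-$K$ sets, so $|S^{(l)}| = |S^*| = K$. It follows that $|S^*\setminus S^{(l)}| = |S^{(l)}\setminus S^*| = \tfrac12|S^{(l)}\triangle S^*|$. Assumption~\ref{ass:topk-approx} then gives
\begin{equation}
\mathbb{E}[|S^*\setminus S^{(l)}|] \le \frac{K\epsilon_{\text{select}}}{2}.
\end{equation}
This halving of the symmetric difference is exactly where the factor $\tfrac12$ in the lemma comes from.

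\textbf{Combining.} Putting the two bounds together yields
\begin{equation}
\mathbb{E}[|S^{(l)}\cap S_{\text{eff}}^*|] \ge K\gamma_{\text{eff}}(1-\epsilon_{\text{oracle}}) - \frac{K\epsilon_{\text{select}}}{2}.
\end{equation}
Finally, we use $\gamma_{\text{eff}}(1-\epsilon_{\text{oracle}}) \ge \gamma_{\text{eff}}(1-\epsilon_{\text{oracle}})(1-\epsilon_{\text{select}}) = \eta_{\text{eff}}$, which holds because $\epsilon_{\text{select}}\in[0,1]$. This weakens the bound to the stated form $K\eta_{\text{eff}} - K\epsilon_{\text{select}}/2$. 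The extra factor $(1-\epsilon_{\text{select}})$ in $\eta_{\text{eff}}$ is slack, and it is kept only to match the notation of Theorem~\ref{thm:main}.

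\textbf{Main obstacle.} The oracle-term bound is the weakest step. Assumption~\ref{ass:routing-oracle} gives per-token classification accuracy, not precision within a Top-$K$ set. If fewer than $K$ tokens are improvable, or if the errors concentrate among the selected tokens, then per-slot precision need not equal $\gamma_{\text{eff}}(1-\epsilon_{\text{oracle}})$. My first approach is to read Assumption~\ref{ass:regularity} as requiring $K\le |S_{\text{eff}}^*|$ in expectation, with the oracle ranking improvable tokens first up to error $\epsilon_{\text{oracle}}$. If that reading fails, I would fall back on a direct counting bound: count the false positives among the $K$ slots by $\epsilon_{\text{oracle}} n$, and absorb the difference into the constants.
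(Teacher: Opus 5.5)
Your proposal follows the paper's proof essentially step for step: the same chain $|S^{(l)}\cap S_{\text{eff}}^*| \ge |S^*\cap S_{\text{eff}}^*| - |S^*\setminus S^{(l)}|$, the same use of $|S^{(l)}|=|S^*|=K$ to bound the loss term by $\tfrac12|S^{(l)}\triangle S^*|$ and hence by $K\epsilon_{\text{select}}/2$ in expectation, and the same final weakening $\gamma_{\text{eff}}(1-\epsilon_{\text{oracle}}) \ge \eta_{\text{eff}}$. The paper also merely asserts the oracle-term bound $\mathbb{E}[|S^*\cap S_{\text{eff}}^*|] \ge K\gamma_{\text{eff}}(1-\epsilon_{\text{oracle}})$ from per-token accuracy, with no Top-$K$ precision argument, so the weakness you flag is shared with the paper's argument rather than resolved by it. Note that your fallback of charging $\epsilon_{\text{oracle}} n$ false positives would change the bound, since the lemma's explicit constants leave nothing to absorb that term into.
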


\begin{proof}[Detailed Proof with Explanation]
We analyze selection quality through three levels:

\textbf{Level 1: Data generation.} A random token is improvable with probability $\gamma_{\text{eff}}$.

\textbf{Level 2: Oracle routing.} The ideal classifier $(w_g^*, \tau^*)$ correctly identifies whether a token is improvable with probability $1-\epsilon_{\text{oracle}}$. Thus, the oracle's Top-$K$ selection $S^*$ contains:
\begin{equation}
\mathbb{E}[|S^* \cap S_{\text{eff}}^*|] \geq K \cdot \gamma_{\text{eff}} \cdot (1-\epsilon_{\text{oracle}})
\end{equation}
improvable tokens in expectation.

\textbf{Level 3: Learned routing.} Our trained router selects $S^{(l)}$ which approximates $S^*$. The symmetric difference $|S^{(l)} \triangle S^*|$ counts mismatches. Since both sets have size $K$:
\begin{equation}
|S^{(l)} \cap S^*| = K - \frac{|S^{(l)} \triangle S^*|}{2}.
\end{equation}

Taking expectations and using Assumption~\ref{ass:topk-approx}:
\begin{equation}
\mathbb{E}[|S^{(l)} \cap S^*|] \geq K\left(1 - \frac{\epsilon_{\text{select}}}{2}\right).
\end{equation}

\textbf{Combining:} The tokens we actually select that are truly improvable satisfy:
\begin{align}
|S^{(l)} \cap S_{\text{eff}}^*| &\geq |S^{(l)} \cap S^* \cap S_{\text{eff}}^*| \\
&\geq |S^* \cap S_{\text{eff}}^*| - |S^* \setminus S^{(l)}| \\
&\geq |S^* \cap S_{\text{eff}}^*| - \frac{|S^{(l)} \triangle S^*|}{2}.
\end{align}

Taking expectations and substituting our bounds:
\begin{equation}
\mathbb{E}[|S^{(l)} \cap S_{\text{eff}}^*|] \geq K\gamma_{\text{eff}}(1-\epsilon_{\text{oracle}}) - \frac{K\epsilon_{\text{select}}}{2} \geq K\eta_{\text{eff}} - \frac{K\epsilon_{\text{select}}}{2}.
\end{equation}
The $-K\epsilon_{\text{select}}/2$ term captures the cost of imperfect learning: we would miss some good tokens due to finite samples, i.e., $\epsilon_{\text{select}} = \tilde{\mathcal{O}}(\sqrt{d/N})$.
\end{proof}

\begin{lemma}[Per-Layer Training Loss Reduction]
\label{lem:per-layer}
For each modified layer $l \in \mathcal{L}_{\text{mod}}$:
\begin{equation}
\mathbb{E}\left[\sum_{i=1}^n \ell(x_i^{(l+1)}, y_i)\right] \leq \mathbb{E}\left[\sum_{i=1}^n \ell(\tilde{x}_i^{(l)}, y_i)\right] - \frac{\delta_0 K}{2n}\eta_{\text{eff}} + \mathcal{O}\left(\frac{K\epsilon_{\text{select}}}{n}\right).
\end{equation}
\end{lemma}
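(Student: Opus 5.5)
The plan is to split the post-layer token set in the modified layer $l$ into the selected set $S^{(l)}$ and its complement, and to compare the loss of each token after the layer with its pre-layer loss $\ell(\tilde{x}_i^{(l)}, y_i)$. For $i\notin S^{(l)}$ the update rule (Eq.~\ref{eq:selective_update}) leaves the representation unchanged, so $x_i^{(l+1)}=\tilde{x}_i^{(l)}$ and these terms cancel exactly. Hence
\begin{equation*}
\sum_{i=1}^n \ell(x_i^{(l+1)},y_i)-\sum_{i=1}^n \ell(\tilde{x}_i^{(l)},y_i)=\sum_{i\in S^{(l)}}\bigl[\ell(\hat{x}_i,y_i)-\ell(\tilde{x}_i,y_i)\bigr].
\end{equation*}
The whole argument therefore reduces to bounding this sum over selected tokens in expectation.

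Next I would partition $S^{(l)}$ into $S^{(l)}\cap S_{\text{eff}}^*$ (improvable tokens) and $S^{(l)}\setminus S_{\text{eff}}^*$ (wrongly selected tokens). On the first part, Lemma~\ref{lem:single-improve} applies after conditioning on $(\tilde{x}_i,y_i)$ and using the tower property. Each such token contributes at most $-\delta_0$, so this part is at most $-\delta_0\,\mathbb{E}|S^{(l)}\cap S_{\text{eff}}^*|$. By Lemma~\ref{lem:coverage}, that is at most $-\delta_0 K\eta_{\text{eff}}+\delta_0 K\epsilon_{\text{select}}/2$.

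On the second part I need an upper bound on how much a second pass can hurt a non-improvable token. I would use Assumptions~\ref{ass:weight-bounded}--\ref{ass:input-bounded}: the loss is Lipschitz, and the representations are bounded by $R_{\ln}$, so $|\ell(\hat{x},y)-\ell(\tilde{x},y)|\le C$ for a constant $C$. The number of misselected tokens in the oracle-wrong region is then controlled by $|S^{(l)}\triangle S^*|$, giving an error of $\mathcal{O}(K\epsilon_{\text{select}})$. For the remaining contribution from tokens that the oracle itself selects but which are not in $\mathcal{G}$, I would rely on the fact that outside $\mathcal{G}$ the conditional expected change is at worst nonpositive up to the stability clause of Assumption~\ref{ass:improvement}, so it can be absorbed into the same error term.

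The last step is normalization. The stated bound carries factors $1/n$ and $1/2$ ($\tfrac{\delta_0K}{2n}\eta_{\text{eff}}$), which indicates that the loss is averaged per token and that a factor-two slack is spent absorbing $\delta_0K\epsilon_{\text{select}}/2$ into the main term using $\epsilon_{\text{select}}\le\eta_{\text{eff}}$ (for large $N$, via Assumption~\ref{ass:regularity}). Dividing by $n$ then yields exactly $-\tfrac{\delta_0K}{2n}\eta_{\text{eff}}+\mathcal{O}(K\epsilon_{\text{select}}/n)$.

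I expect the main obstacle to be the control of non-improvable selected tokens. The assumptions only bound gains on $\mathcal{G}$ and say nothing directly about harm elsewhere, so this is where I would first try the bounded-loss-difference argument combined with the stability condition.
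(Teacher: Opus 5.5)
Your decomposition is the paper's: unselected tokens cancel, selected tokens split into $S^{(l)}\cap S_{\text{eff}}^*$ and $S^{(l)}\setminus S_{\text{eff}}^*$, and Lemma~\ref{lem:coverage} handles the first part. The step you flagged as the main obstacle is a genuine gap, not just a difficulty. Assumption~\ref{ass:improvement} gives no control of harm outside $\mathcal{G}$. By definition, $\mathcal{G}^c$ is where $\mathbb{E}[\ell(\hat{x},y)-\ell(\tilde{x},y)\mid\tilde{x},y]>-\delta_0$, which allows strictly positive changes up to your constant $C$. The stability clause only concerns robustness of membership in $\mathcal{G}$, not the sign of the change off $\mathcal{G}$.

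Nor is that group small. Since $|S^{(l)}\setminus S_{\text{eff}}^*|\ge K-|S_{\text{eff}}^*|$, whenever $\rho>\gamma_{\text{eff}}$ it has expected size at least $(\rho-\gamma_{\text{eff}})n$, even with $S^{(l)}=S^*$ and a perfect oracle. Its harm is therefore of order $CK(1-\eta_{\text{eff}})$, not $\mathcal{O}(K\epsilon_{\text{select}})$. The factor-$\tfrac{1}{2}$ slack could cover it only if $C(1-\eta_{\text{eff}})\le\delta_0\eta_{\text{eff}}/2$, which no assumption provides.

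Concretely, nothing in Assumptions~\ref{ass:weight-bounded}--\ref{ass:regularity} excludes a second pass that lowers the loss of improvable tokens by exactly $\delta_0$ and raises that of every non-improvable token by a fixed $c>0$. The expected change is then $-\delta_0\,\mathbb{E}|S^{(l)}\cap S_{\text{eff}}^*|+c\,\mathbb{E}|S^{(l)}\setminus S_{\text{eff}}^*|$. This is positive as soon as $c(\rho-\gamma_{\text{eff}})>\delta_0\rho$. The claimed right-hand side, by contrast, is negative once $\epsilon_{\text{select}}=\tilde{\mathcal{O}}(\sqrt{d/N})$ is small. So the lemma does not follow from the stated assumptions along this route.

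The paper's own proof does not close this hole either. It bounds the same group by $\frac{K(1-\eta_{\text{eff}})+K\epsilon_{\text{select}}}{n}\cdot O(1)$. It then absorbs the $K(1-\eta_{\text{eff}})/n$ part into the error term on the grounds that it is non-negative and a lower bound on the improvement is sought. That reasoning is backwards: a nonnegative harm term is exactly what cannot be discarded from a lower bound on the improvement.

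To make either argument rigorous you need an extra hypothesis. One option is a no-harm condition, $\mathbb{E}[\ell(\hat{x},y)-\ell(\tilde{x},y)\mid\tilde{x},y]\le 0$ for all pairs. Then the second group contributes at most $0$. Your bound $-\delta_0K\eta_{\text{eff}}+\delta_0K\epsilon_{\text{select}}/2$ on the unnormalized sums already implies the stated inequality whenever $\epsilon_{\text{select}}\le\eta_{\text{eff}}$, so you do not need to reinterpret the sums as per-token averages. Without such a hypothesis, the statement must carry an additional $+\mathcal{O}(K(1-\eta_{\text{eff}})/n)$ term.

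Separately, your tower-property step needs the $\delta_0$ gain to hold conditionally on the selection event. The event $\{i\in S^{(l)}\}$ and the output $\hat{x}_i$ (computed by attention over the selected tokens) depend on the other tokens, not only on $(\tilde{x}_i,y_i)$. Lemma~\ref{lem:single-improve} conditions only on $(\tilde{x}_i,y_i)$, so you should assume the gain holds given the full layer input.
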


\begin{proof}[Detailed Proof]
Partition the $n$ tokens into three groups at layer $l$:

\begin{enumerate}[label=\textbf{Group \arabic*:}]
    \item $S^{(l)} \cap S_{\text{eff}}^*$: Selected AND improvable. By Lemma~\ref{lem:single-improve}, each improves by $\delta_0$. Count: $\approx K\eta_{\text{eff}}$ by Lemma~\ref{lem:coverage}.
    
    \item $S^{(l)} \setminus S_{\text{eff}}^*$: Selected but NOT improvable. We waste compute here. By Lipschitz continuity (Assumption~\ref{ass:weight-bounded}), loss change is bounded by $O(1)$. Count: $\leq K(1-\eta_{\text{eff}}) + K\epsilon_{\text{select}}$.
    
    \item $[n] \setminus S^{(l)}$: Not selected. Loss unchanged.
\end{enumerate}

\textbf{Net improvement:}
\begin{align}
&\text{(Improvement from Group 1)} + \text{(Possible harm from Group 2)} \\
&\geq \frac{K\eta_{\text{eff}}}{n} \cdot \delta_0 - \frac{K(1-\eta_{\text{eff}}) + K\epsilon_{\text{select}}}{n} \cdot O(1) \\
&= \frac{K\delta_0}{n}\eta_{\text{eff}} - O\left(\frac{K\epsilon_{\text{select}}}{n}\right),
\end{align}
where we absorbed $K(1-\eta_{\text{eff}})/n$ into the error term since it's non-negative and we seek a lower bound on improvement. The factor of $1/2$ in the theorem statement comes from more careful accounting of the error terms.
\end{proof}

\paragraph{Total Training Error Reduction}
\label{prop:training}
\begin{equation}
\hat{\mathcal{L}}_S(f_{\text{topk}}) \leq \hat{\mathcal{L}}_S(f_{\text{std}}) - \frac{\delta_0 K L}{4n}\eta_{\text{eff}} + \mathcal{O}\left(\frac{KL\epsilon_{\text{select}}}{n}\right).
\end{equation}

\begin{proof}
Sum Lemma~\ref{lem:per-layer} over $M = L/2$ modified layers. Standard layers contribute zero net change. The factor $1/4$ (vs $1/2$) accounts for conservative estimation of error term interactions. Training loss improves because we identify and refine hard tokens. The improvement scales with the number of modified layers ($L$), selection budget ($K$), fraction of hard tokens ($\eta_{\text{eff}}$), and per-token improvement ($\delta_0$).
\end{proof}

\subsubsection{Stage II: How Does Generalization Change?}

\begin{lemma}[Lipschitz Reduction from Sparse Attention]
\label{lem:lipschitz}
For modified layer $l$:
\begin{equation}
\text{Lip}(\mathcal{T}_{\text{topk}}^{(l)}) \leq L_{\text{lip}}(1-\rho + \rho^2 L_{\text{lip}}),
\end{equation}
where $L_{\text{lip}} = (1+B^4R_{\ln}^2/\sqrt{d}+B^2)(1+B^2)$ and $\rho = K/n$.
\end{lemma}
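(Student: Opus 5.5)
The plan is to write the modified layer as a convex split between unselected and selected tokens, bound the Lipschitz constant of each branch separately, and add them. Let $\tilde X^{(l)}$ be the post-first-pass representation and $S^{(l)}$ the routed set with $|S^{(l)}|=K=\rho n$. The modified layer $\mathcal{T}^{(l)}_{\text{topk}}$ acts as the standard block $g^{(l)}$ (attention plus FFN with residuals) on all tokens. It then applies $g^{(l)}$ a second time only to the rows indexed by $S^{(l)}$, and the attention in this second pass is restricted to the $K\times K$ submatrix. Rows outside $S^{(l)}$ are left unchanged by the second pass.

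First I would establish that a single standard block satisfies $\mathrm{Lip}(g^{(l)})\le L_{\text{lip}}$, using Assumptions~\ref{ass:weight-bounded} and~\ref{ass:input-bounded}.

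1. For the attention sublayer $X\mapsto X+\mathrm{softmax}(XW_QW_K^\top X^\top/\sqrt d)XW_VW_O$, differentiate the product of the softmax and the value path. The softmax Jacobian is bounded by $1$ in operator norm.
2. The derivative through the logits contributes $\|W_Q\|\|W_K\|\|X\|_F^2/\sqrt d\cdot\|W_V\|\|W_O\|\le B^4R_{\ln}^2/\sqrt d$. The value path contributes $B^2$. Together with the residual term this gives the factor $1+B^4R_{\ln}^2/\sqrt d+B^2$.
3. The FFN sublayer with residual gives $1+B_1B_2\le 1+B^2$, using a 1-Lipschitz activation.
4. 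Composing the two sublayers yields $L_{\text{lip}}$. LayerNorm is absorbed by the bounded-representation assumption, since it keeps the inputs within radius $R_{\ln}$.

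Next I would decompose the perturbation. For two inputs $X,X'$, write the difference of outputs row-block-wise as (unselected rows) plus (selected rows), and treat the routing set as fixed. The unselected rows see only the first pass, so they carry a Lipschitz factor $L_{\text{lip}}$ weighted by their share $1-\rho$ of the tokens. The selected rows pass through $g^{(l)}$ restricted to $S^{(l)}$ after the first pass. Their composed constant is $L_{\text{lip}}\cdot L_{\text{lip}}^{(K)}$, where $L_{\text{lip}}^{(K)}$ is the constant of the restricted block. This block contributes only through a $K$-row submatrix, so its influence on a token-averaged norm scales by $\rho$, and the selected share contributes another factor $\rho$. Summing the two branches gives $L_{\text{lip}}(1-\rho)+\rho^2L_{\text{lip}}^2$, which is the claimed $L_{\text{lip}}(1-\rho+\rho^2L_{\text{lip}})$.

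The hardest step is justifying the two factors of $\rho$ rigorously. Operator-norm Lipschitz constants do not automatically shrink under row restriction, and the selected set itself depends on the input. To handle this, I would measure Lipschitzness in the normalized norm $\|X\|_F/\sqrt n$. I would then argue that the restricted softmax over $K$ keys, combined with $\|X_S\|_F^2\le\rho\|X\|_F^2$ in an averaged sense under regularity, gives a factor $\rho$ on the logit term. I would treat the routing as a fixed mask, either by conditioning on $S^{(l)}$ or by using the stability clause of Assumption~\ref{ass:improvement}. Any cost from switches in the selected set would be deferred into the $\epsilon_{\text{select}}$ terms.
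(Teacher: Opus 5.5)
\textbf{Genuine gap: the two factors of $\rho$ cannot be justified, and the inequality is false as stated.}

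Your decomposition is the one the paper uses: credit the $n-K$ unselected tokens with $(1-\rho)L_{\text{lip}}$, credit the $K$ selected tokens with $\rho\cdot\rho L_{\text{lip}}^2$, and add. The paper supports both factors of $\rho$ only by saying the ``effective Lipschitz scales with the density of interactions.'' The step you flag as hardest is exactly where this fails, and no choice of norm or regularity condition repairs it.

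A Lipschitz constant is a supremum over perturbations, not a token-share average. If the output rows split into blocks $U$ and $S$, all you get is $\|\Delta_{\text{out}}\|_F^2=\|\Delta_{\text{out},U}\|_F^2+\|\Delta_{\text{out},S}\|_F^2$. That yields $\sqrt{L_U^2+L_S^2}$, or $\max\{L_U,L_S\}$ when the blocks decouple. It never yields $(1-\rho)L_U+\rho L_S$, which is below $\max\{L_U,L_S\}$ whenever $L_U\neq L_S$.

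The specific devices you propose do not help:
\begin{itemize}
\item A perturbation supported on the selected rows has $\|\Delta_S\|_F=\|\Delta\|_F$. The averaged inequality $\|X_S\|_F^2\le\rho\|X\|_F^2$ says nothing about such a perturbation.
\item Dividing by $\sqrt n$ rescales numerator and denominator alike, so the normalized norm changes nothing.
\item A genuine factor $\rho$ on the logit term of the restricted softmax would leave the residual $1$ and the value-path $B^2$ in $L_{\text{lip}}$ untouched. The best bound your first-step machinery gives for the second pass is still of the form $1+(\cdots)\ge 1$, not $\rho L_{\text{lip}}$. Indeed, a residual block whose branch is only norm-bounded can be the identity.
\end{itemize}
Granting your first step for both the full and the restricted block, the decomposition honestly gives, for a fixed routing set, $\mathrm{Lip}\le L_{\text{lip}}\max\{1,L_{\text{lip}}\}=L_{\text{lip}}^2$. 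That is larger than $L_{\text{lip}}$, not smaller.

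No argument can close the gap, because the inequality is false. Set every weight matrix to zero; this satisfies Assumption~\ref{ass:weight-bounded} for every $B\ge 0$. In the residual model you wrote down, each pass, full or restricted to $S^{(l)}$, is then the identity. So $\mathcal{T}^{(l)}_{\text{topk}}$ is the identity map, with Lipschitz constant $1$ in any norm. But at $B=0$ we have $L_{\text{lip}}=1$, and the claimed bound is $1-\rho+\rho^2=1-\rho(1-\rho)<1$ for every $\rho\in(0,1)$. By continuity the contradiction persists for all sufficiently small $B>0$.

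Two further problems would remain even for a corrected statement:
\begin{itemize}
\item The Top-$K$ set depends on the input and changes discontinuously when router scores cross. The modified layer is therefore generally not even continuous, and conditioning on $S^{(l)}$ does not give a global Lipschitz constant.
\item The stability clause of Assumption~\ref{ass:improvement} concerns $\mathcal{G}$, not the router's selection. Also, $\epsilon_{\text{select}}$ bounds an expectation, so it cannot absorb a worst-case quantity.
\end{itemize}
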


\begin{proof}[Detailed Proof with Intuition]
Consider how a small input perturbation propagates:

\textbf{Unselected tokens ($n-K$ of them):} Go through one standard pass. Lipschitz contribution: $(1-K/n) \cdot L_{\text{lip}} = (1-\rho)L_{\text{lip}}$.

\textbf{Selected tokens ($K$ of them):} Go through TWO passes, BUT with $K \times K$ attention instead of $n \times n$. 

The key insight: attention matrix $A \in \mathbb{R}^{K \times K}$ has operator norm $\leq 1$ (row-stochastic), same as $n \times n$ case. However, the \emph{effective} Lipschitz scales with the ``density'' of interactions. For selected tokens, the second pass sees only $K$ other tokens, giving effective Lipschitz $(K/n) \cdot L_{\text{lip}}^2 = \rho L_{\text{lip}}^2$.

\textbf{Combined:}
\begin{equation}
\text{Lip}_{\text{total}} = (1-\rho) \cdot L_{\text{lip}} + \rho \cdot (\rho L_{\text{lip}}^2) = L_{\text{lip}}(1-\rho + \rho^2 L_{\text{lip}}). 
\end{equation}
When $\rho < 1/L_{\text{lip}}$, we have $1-\rho + \rho^2 L_{\text{lip}} < 1$, so modified layers are more stable than standard layers.

\end{proof}

\begin{lemma}[Rademacher Complexity Bound]
\label{lem:rademacher}
\begin{equation}
\mathfrak{R}_N(\mathcal{F}_{\text{topk}}) \leq \frac{C R_{\ln} L_{\text{lip}}^L \sqrt{W_{\text{topk}}\log W_{\text{topk}}}}{\sqrt{N}} \cdot \exp\left(-\frac{c_0 K^2 L}{n^2}\right) + \tilde{\mathcal{O}}\left(\sqrt{\frac{dL}{N}}\right).
\end{equation}
\end{lemma}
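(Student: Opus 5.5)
The plan is to bound $\mathfrak{R}_N(\mathcal{F}_{\text{topk}})$ with a covering-number (Dudley entropy integral) argument in the style of norm-based bounds for deep networks. The difficulty is that the Top-$K$ routing is not continuous, so I first split the class into two parts. The first is the discrete routing pattern, meaning the choice of the sets $S^{(l)}$ in the $M$ modified layers. The second is the continuous transformer map obtained once that pattern is fixed. For the routing part, each $S^{(l)}$ is determined by ranking the scores $w_g^{(l)\top}\tilde{x}_i$, and $w_g^{(l)}$ is a linear functional on $\mathbb{R}^d$ with $\|w_g^{(l)}\|_2\le B_g$ (Assumption~\ref{ass:weight-bounded}). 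On $N$ samples, the number of distinct selection patterns per layer is therefore at most polynomial in $nN$ with degree $O(d)$, by Sauer's lemma applied to pairwise comparisons of linear scores. Taking the product over $M=\lceil L/2\rceil$ layers gives a log-cardinality of $O(dL\log(nN))$. A union bound over patterns, via Massart's finite-class lemma, then contributes the additive term $\tilde{\mathcal{O}}(\sqrt{dL/N})$.

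With the pattern fixed, each layer is a deterministic Lipschitz map, and I peel the network layer by layer. For a standard layer, the attention and FFN blocks under Assumptions~\ref{ass:weight-bounded}--\ref{ass:input-bounded} give the per-layer constant $L_{\text{lip}}=(1+B^4R_{\ln}^2/\sqrt d+B^2)(1+B^2)$. The softmax derivative contributes the $B^4R_{\ln}^2/\sqrt d$ factor, and the residual branches give the $1+\cdot$ factors. For a modified layer I invoke Lemma~\ref{lem:lipschitz}, which gives $L_{\text{lip}}(1-\rho+\rho^2L_{\text{lip}})$. Composing, the whole network has Lipschitz constant at most $L_{\text{lip}}^L\prod_{l\in\mathcal{L}_{\text{mod}}}(1-\rho+\rho^2L_{\text{lip}})$. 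The standard parameter-space covering argument then applies: cover each weight matrix at scale $\epsilon/(L\cdot\text{Lip})$, which costs $\log\mathcal{N}\lesssim W_{\text{topk}}\log(W_{\text{topk}}R_{\ln}L_{\text{lip}}^L/\epsilon)$. Feeding this into Dudley's integral gives the prefactor $C R_{\ln}L_{\text{lip}}^L\sqrt{W_{\text{topk}}\log W_{\text{topk}}}/\sqrt N$, multiplied by the product of the modified-layer factors.

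It remains to turn that product into the exponential. Write $q=1-\rho+\rho^2L_{\text{lip}}=1-\rho(1-\rho L_{\text{lip}})$. In the regime $\rho\le 1/(2L_{\text{lip}})$, which the remark after Lemma~\ref{lem:lipschitz} already requires, we have $q\le 1-\rho/2\le e^{-\rho/2}$. Since $\rho<1$ implies $\rho\ge\rho^2$, it follows that $q^M\le\exp(-M\rho^2/2)=\exp(-c_0K^2L/n^2)$ with $c_0=1/4$, using $M\ge L/2$. I keep the weaker $\rho^2$ exponent, rather than $\rho$, to match the form of the statement; it is what the later $\Omega(K^2L/n^2\cdot\cdot)$ term uses.

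The main obstacle is the first step, decoupling the discontinuous Top-$K$ selection from the continuous covering. The selection patterns in deeper layers depend on the representations produced by the earlier layers, which are themselves parameter-dependent. So the simple count over linear scores on fixed inputs does not apply directly. My first attempt would be to count patterns sequentially. Conditional on the parameters of layers $1,\dots,l-1$ lying in a given cover cell, the layer-$l$ inputs are fixed up to $\epsilon$. A margin assumption on the router scores, which I would justify from the stability clause in Assumption~\ref{ass:improvement}, makes the selection constant within each cell. The per-layer counts then multiply, giving the $dL$ factor stated above. If the margin cannot be justified, the fallback is to absorb the boundary cases into the $\epsilon_{\text{select}}$ slack of Assumption~\ref{ass:topk-approx}.
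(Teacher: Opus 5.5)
Your skeleton matches the paper's proof: per-layer factors from Lemma~\ref{lem:lipschitz}, their product over the $M$ modified layers turned into an exponential, a parameter-count covering fed into Dudley's integral, and a separate additive routing term. The paper's Step~2 simply asserts that the $\tilde{\mathcal{O}}(\sqrt{dL/N})$ term ``accounts for the routing network's Rademacher complexity.'' Your conversion $(1-\rho+\rho^2L_{\text{lip}})^M\le e^{-M\rho/2}\le\exp(-K^2L/(4n^2))$ is correct and cleaner than the paper's Step~1. However, $\rho\le 1/(2L_{\text{lip}})$ is a hypothesis you are adding, since Assumption~\ref{ass:regularity} only gives $\rho\in(0,1)$. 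The paper needs it tacitly too: otherwise $1-\rho+\rho^2L_{\text{lip}}>1$ and no decaying exponential is possible along this route.

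The genuine gap is the one you flag, and it is more serious than you describe. Sauer's lemma needs the router inputs to be fixed vectors. But the router reads $\tilde{x}^{(l)}_i$, the output of the first pass through layer $l$. So already at the first modified layer, the comparisons $w_g^{(l)\top}(\tilde{x}_i-\tilde{x}_j)$ depend on that layer's $\Theta(d^2)$ attention and FFN weights, and at later layers on all earlier weights. The available sign-pattern bounds for such parameter-dependent functions (Warren or Goldberg--Jerrum type, or Pfaffian bounds for softmax and LayerNorm) give $\log|\Pi|$ growing with $W_{\text{topk}}$, not with $dL$. Nothing in the assumptions rules out that many patterns. Your union term then becomes $\tilde{\mathcal{O}}(\sqrt{W_{\text{topk}}/N})$ with no exponential damping, which is as large as the undamped complexity the lemma is supposed to improve on.

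Neither of your fixes closes this.
\begin{itemize}
\item The stability clause of Assumption~\ref{ass:improvement} concerns membership of $(\tilde{x},y)$ in $\mathcal{G}$, not gaps between Top-$K$ router scores.
\item No margin can hold uniformly over $\mathcal{F}_{\text{topk}}$. Score ties occur inside the class, for instance at $w_g^{(l)}=0$, which Assumption~\ref{ass:weight-bounded} allows. With the other weights fixed, every neighbourhood of such a point realizes every selection achievable by a linear score. An arbitrarily small parameter change then moves the output by a non-vanishing amount, so $\theta\mapsto f_\theta$ is not Lipschitz in the parameters. Your step ``cover each weight matrix at scale $\epsilon/(L\cdot\mathrm{Lip})$'' therefore already fails.
\item A margin $\gamma$ would have to be built into the hypothesis class, which changes the statement. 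In that case Sauer is unnecessary: at scales $\epsilon\lesssim\gamma/(B_g\,\mathrm{Lip})$ the selection is constant on each cover cell, and the router's $Md$ parameters simply enter $W_{\text{topk}}$.
\item The $\epsilon_{\text{select}}$ fallback is also unavailable. Assumption~\ref{ass:topk-approx} controls only the trained router's expected disagreement with the oracle, whereas $\mathfrak{R}_N(\mathcal{F}_{\text{topk}})$ is a supremum over the whole class.
\end{itemize}

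A smaller point concerns the Dudley step. With $\log\mathcal{N}(\epsilon)\lesssim W\log(A/\epsilon)$, Dudley's integral gives roughly $\Delta\sqrt{W\log(A/\Delta)/N}$, where $\Delta$ is the diameter of the class. So the Lipschitz product enters only inside the logarithm. It multiplies the bound only if you bound $\Delta$ by input radius times input-Lipschitz constant, a step you (like the paper) leave implicit.
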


\begin{proof}[Detailed Proof]
\textbf{Step 1: Lipschitz product across layers.}

For $M = L/2$ modified layers and $L-M$ standard layers:
\begin{equation}
\prod_{l=1}^L \text{Lip}^{(l)} \leq L_{\text{lip}}^{L-M} \cdot \prod_{l \in \mathcal{L}_{\text{mod}}} L_{\text{lip}}(1-\rho+\rho^2 L_{\text{lip}}) = L_{\text{lip}}^L \exp\left(-\frac{M\rho(1-\rho L_{\text{lip}})}{2}\right),
\end{equation}
using $1-x \leq e^{-x}$ and $M = L/2$.

\textbf{Step 2: Dudley's entropy integral.}

The covering number satisfies $\mathcal{N}(\mathcal{F}_{\text{topk}}, \epsilon) \leq (C'W_{\text{topk}}/\epsilon)^{W_{\text{topk}}}$. Applying Dudley's theorem:
\begin{equation}
\mathfrak{R}_N \leq \inf_{\alpha>0}\left\{4\alpha + \frac{12}{\sqrt{N}}\int_{\alpha}^{\text{diam}} \sqrt{\log \mathcal{N}(\epsilon)} d\epsilon\right\}.
\end{equation}

The Lipschitz reduction enters as $\exp(-c_0 K^2 L/n^2)$ in the final bound. The $\tilde{\mathcal{O}}(\sqrt{dL/N})$ term accounts for the routing network's Rademacher complexity. The $\exp(-c_0 K^2 L/n^2)$ factor is crucial: more sparsity (larger $K^2/n^2$) and deeper networks (larger $L$) exponentially reduce complexity.
\end{proof}

\subsubsection{Stage III: Putting It All Together}

\begin{proof}[Proof of Theorem~\ref{thm:main}]
We combine training error and generalization via the standard decomposition:
\begin{equation}
\mathcal{L}_{\mathcal{D}}(f) = \underbrace{\hat{\mathcal{L}}_S(f)}_{\text{training error}} + \underbrace{(\mathcal{L}_{\mathcal{D}}(f) - \hat{\mathcal{L}}_S(f))}_{\text{generalization gap}}.
\end{equation}

\textbf{Step 1: Apply Proposition~\ref{prop:training} for training error.}
\begin{equation}
\hat{\mathcal{L}}_S(f_{\text{topk}}) \leq \hat{\mathcal{L}}_S(f_{\text{std}}) - \frac{\delta_0 K L}{4n}\eta_{\text{eff}} + \text{error}_1.
\end{equation}

\textbf{Step 2: Bound generalization gap via Lemma~\ref{lem:rademacher}.}
With probability $\geq 1-\delta$:
\begin{equation}
|\mathcal{L}_{\mathcal{D}}(f_{\text{topk}}) - \hat{\mathcal{L}}_S(f_{\text{topk}})| \leq 2\mathfrak{R}_N(\mathcal{F}_{\text{topk}}) + \mathcal{O}\left(\sqrt{\frac{\log(1/\delta)}{N}}\right).
\end{equation}

\textbf{Step 3: Compare to standard Transformer.}

The standard Transformer has:
\begin{equation}
\mathfrak{R}_N(\mathcal{F}_{\text{std}}) \geq \frac{C' R_{\ln} L_{\text{lip}}^L \sqrt{W_{\text{std}}\log W_{\text{std}}}}{\sqrt{N}},
\end{equation}
without the exponential reduction factor. Since $W_{\text{topk}} = W_{\text{std}}(1+O(1/d))$, the complexity advantage is:
\begin{equation}
\mathfrak{R}_N(\mathcal{F}_{\text{std}}) - \mathfrak{R}_N(\mathcal{F}_{\text{topk}}) \geq \Omega\left(\frac{K^2 L}{n^2}\sqrt{\frac{W_{\text{std}}}{N}}\right).
\end{equation}

\textbf{Step 4: Final combination.}

Writing $\mathcal{L}_{\mathcal{D}}(f_{\text{std}}) = \hat{\mathcal{L}}_S(f_{\text{std}}) + \text{Gap}_{\text{std}}$ and similarly for $f_{\text{topk}}$:
\begin{align}
\mathcal{L}_{\mathcal{D}}(f_{\text{topk}}) &= \hat{\mathcal{L}}_S(f_{\text{topk}}) + \text{Gap}_{\text{topk}} \\
&\leq \hat{\mathcal{L}}_S(f_{\text{std}}) - \frac{\delta_0 K L}{4n}\eta_{\text{eff}} + \text{Gap}_{\text{std}} - (\text{Gap}_{\text{std}} - \text{Gap}_{\text{topk}}) + \text{error terms} \\
&= \mathcal{L}_{\mathcal{D}}(f_{\text{std}}) - \frac{\delta_0 K L}{4n}\eta_{\text{eff}} - \Omega\left(\frac{K^2 L}{n^2}\sqrt{\frac{W_{\text{std}}}{N}}\right) + \tilde{\mathcal{O}}\left(\sqrt{\frac{W_{\text{topk}}\log(N/\delta)}{N}}\right).
\end{align}

This completes the proof.
\end{proof}

\subsubsection{Interpretation and Discussion - When Does Token-wise Depth Scaling Win?}
Theorem~\ref{thm:main} shows improvement when:
\begin{equation}
\frac{\delta_0 K L}{4n}\eta_{\text{eff}} + \Omega\left(\frac{K^2 L}{n^2}\sqrt{\frac{W}{N}}\right) > \tilde{\mathcal{O}}\left(\sqrt{\frac{W_{\text{topk}}}{N}}\right).
\end{equation}

This holds when:
\begin{itemize}
    \item $\eta_{\text{eff}}$ is bounded away from zero (hard tokens exist and are learnable)
    \item $N$ is large enough (routing learns well)
    \item $K = \Theta(n/\sqrt{L})$ balances adaptive gain vs. sparsity benefit (i.e., only important tokens perform interaction within depth scaling)
\end{itemize}

\section{Baseline Details}
\label{app:baseline}

\vspace{0.5em}
\noindent\textbf{(1) Zero-shot VLMs}

\begin{itemize}
\item \textbf{GPT-4o}~\cite{gpt4v}. GPT-4o is a proprietary, end-to-end multimodal model capable of natively processing text, audio, and visual inputs. It provides robust real-time multimodal reasoning and state-of-the-art instruction-following performance across diverse vision-language tasks.

\item \textbf{LLaVA-OneVision}~\cite{li2024llava}. LLaVA-OneVision is a unified open-source multimodal model designed for versatile visual understanding, including single-image, multi-image, and video inputs. It exhibits superior cross-scenario transferability, effectively bridging the gap between static image analysis and dynamic video comprehension.

\item \textbf{InternVL3.5-8B}~\cite{wang2025internvl35}. InternVL3.5 represents a series of high-performance open-source multimodal models that enhance reasoning capabilities and computational efficiency. Through the integration of cascade reinforcement learning and dynamic visual-resolution routing, this model achieves strong performance. We utilize the 8B variant as a representative open-source baseline.

\item \textbf{Qwen2.5-VL-7B}~\cite{bai2025qwen25vl}. Qwen2.5-VL is a flagship open-source VLM series optimized for advanced recognition, spatial localization, and document/video understanding. It employs dynamic resolution processing and absolute time encoding to facilitate long-video comprehension. We adopt the 7B variant as a key open-source baseline for our comparisons.

\end{itemize}

\vspace{0.5em}
\noindent\textbf{(2) Explicit CoT-based Methods}
\begin{itemize}
    \item \textbf{SCAFFOLD}~\citep{lei2024scaffolding}. SCAFFOLD proposes a prompting strategy designed to enhance vision-language coordination through coordinate scaffolding; Specifically, it superimposes a dot matrix onto the image to establish visual anchors, while leveraging multi-dimensional coordinates as explicit textual positional references.
    \item \textbf{ICoT}~\citep{gao2024interleaved}. ICoT introduces an Interleaved-modal Chain-of-Thought framework that derives answers through sequential steps of paired visual and textual rationales. Additionally, an attention-driven selection mechanism enables adaptation to existing VLMs by intelligently inserting informative image regions guided solely by the model's attention maps.
    \item \textbf{Multimodal-CoT}~\citep{zhang2023multimodal}. Multimodal-CoT incorporates language (text) and vision (images) modalities into a two-stage framework that separates rationale generation and answer inference.
    \item \textbf{CCoT}~\citep{MitraCCoT}. Compositional Chain-of-Thought (CCoT) is a zero-shot prompting method designed to extract compositional knowledge from LMMs through scene graph (SG) representations. By first deriving an SG from the LMM and incorporating it into the reasoning chain, the proposed method effectively enhances the compositional accuracy of the generated responses.
    \item \textbf{Chain-of-Focus}~\citep{zhang2025chain}. Chain-of-Focus (CoF) empowers VLMs to perform adaptive spatial zooming on critical image regions. By leveraging visual cues aligned with the given query, CoF enables dynamic, region-aware reasoning, significantly enhancing the efficiency and precision of multimodal understanding.
\end{itemize}

\noindent\textbf{(3) Tool-use \& RL Enhanced Reasoning}
\begin{itemize}

\item \textbf{DeepEyes}~\cite{zheng2025deepeyes}. To facilitate ``thinking with images'', DeepEyes adopts an interleaved reasoning paradigm. By optimizing tool-assisted visual interaction through end-to-end reinforcement learning, it ensures robust visual grounding and effectively suppresses hallucinations, which are further mitigated by carefully designed data selection and reward shaping strategies.

\item \textbf{Vision-R1}~\cite{huang2025vision}. Extending the R1-style training paradigm to the multimodal domain, Vision-R1 establishes a large-scale chain-of-thought (CoT) dataset to bootstrap MLLM reasoning. The framework subsequently employs reinforcement learning, incorporating techniques like progressive thinking suppression to prune suboptimal reasoning paths and elevate overall multimodal problem-solving accuracy.

\item \textbf{PAPO}~\cite{wang2025perception}. Recognizing the perception-action discrepancy in multimodal RLVR, PAPO (Perception-Aware Policy Optimization) introduces an implicit perception loss defined by a KL-divergence term. This architectural innovation allows integration into standard GRPO/DAPO frameworks, enabling performance gains in vision-dependent tasks without necessitating auxiliary reward or teacher models.

\item \textbf{VL-Rethinker}~\cite{wang2025vl}. VL-Rethinker advocates for a ``slow-thinking'' approach in VLMs by embedding explicit self-reflection cycles into the reasoning process. By augmenting the GRPO objective with selective sample replay, the model is compelled to critically evaluate its intermediate logic, thereby demonstrating significant improvements in complex, multi-step visual reasoning.
\end{itemize}

\vspace{0.5em}
\noindent\textbf{(4) Multimodal Latent Reasoning}
\begin{itemize}
    \item \textbf{Laser}~\cite{laser2026forest}. Laser introduces dynamic windowed alignment learning to reformulate visual deduction. Rather than relying on point-wise predictions, it aligns latent states with temporal windows of future semantics. This enforces a cognitive hierarchy that prioritizes global feature superposition before converging on local details.
    \item \textbf{LVR}~\cite{li2025latent}. Latent Visual Reasoning (LVR) enables autoregressive reasoning in the visual embedding space by training the model to generate latent states that reconstruct key visual tokens, interleaved with text generation; it can be further combined with RL to balance latent reasoning and textual outputs.
    \item \textbf{Monet}~\cite{wang2025monet}. Monet enables latent visual reasoning by generating continuous visual embeddings as intermediate ``visual thoughts'' and proposes a multi-stage distillation SFT pipeline plus visual-latent policy optimization to better train reasoning in latent visual space.
    \item \textbf{DMRL}~\cite{liu2025reasoningminddynamicmultimodal}. DMLR presents a test-time framework that enables dynamic multimodal reasoning. By employing confidence-guided policy gradient optimization, DMLR refines latent thought tokens to enhance the depth and precision of the model's internal reasoning process.
\end{itemize}

\section{Benchmark Details}
\label{app:benchmark}

\noindent \textbf{(1) Visual Perception Benchmarks}
\begin{itemize}
\item \textbf{MMVP}~\cite{mmvp} identifies the inherent limitations of CLIP-based vision encoders by employing 150 pairs of visual patterns that are perceptually distinct to humans yet challenging for latent embeddings. It serves as a diagnostic tool to expose models that provide overconfident but ungrounded predictions, effectively decoupling superficial pattern matching from true visual comprehension.
\item \textbf{SEEDBench2Plus}~\cite{seedbench2plus} focuses on complex, text-intensive visual environments such as charts, maps, and web screenshots. By utilizing human-curated, multiple-choice diagnostics, it rigorously assesses an LVLM’s proficiency in integrated reading and logical reasoning within high-density multimodal contexts.
\item \textbf{HallusionBench}~\cite{hallusionbench} acts as a stress test for the entanglement between linguistic hallucinations and visual illusions. It employs a controlled, question-based framework to systematically isolate and quantify logical inconsistency, offering a granular analysis of common failure modes in LVLM perception.
\item \textbf{HRBench}~\cite{hrbench} probes the limitations of traditional downsampling in high-resolution visual tasks. By leveraging 4K/8K imagery, it evaluates whether models can preserve and interpret native-resolution information, identifying failures where critical visual nuances are lost to aggressive resizing.
\end{itemize}

\noindent \textbf{(2) Compositional Reasoning Benchmarks}
\begin{itemize}
\item \textbf{BLINK}~\cite{blink} emphasizes perception-centric reasoning by posing 3.8K questions that demand immediate, high-fidelity visual engagement—ranging from depth and geometry perception to multi-view correspondence. It is specifically engineered to defeat language-prior dependencies, ensuring that reasoning is strictly grounded in the provided imagery.
\item \textbf{MMStar}~\cite{mmstar} provides a diagnostic assessment of both perception and reasoning through a purified, vision-indispensable dataset. By systematically mitigating language-only shortcuts and data contamination, it offers a robust, multi-axis framework for evaluating the depth of a model's multimodal integration.
\end{itemize}

\noindent \textbf{(3) Mathematical Reasoning Benchmarks}
\begin{itemize}
\item \textbf{MMVista}~\cite{mathvista} is a comprehensive multimodal benchmark specifically designed to evaluate a model's ability to reason over complex visual-textual information. It features a diverse set of tasks that require the integration of visual perception and logical reasoning, often involving high-resolution imagery and multi-step problem solving in real-world scenarios.
\item \textbf{MM-Math}~\cite{mmmath} is a specialized benchmark targeting the intersection of multimodal perception and mathematical reasoning. It comprises a collection of geometry, algebra, and calculus problems that are visually grounded, requiring models to accurately extract quantitative data from diagrams, graphs, and tables to perform precise symbolic calculations.
\item \textbf{MathVision}~\cite{mathvision} is a rigorous benchmark designed to evaluate the mathematical reasoning capabilities of large multimodal models (LMMs) in visually-rich contexts. It contains thousands of challenging, human-annotated problems spanning various mathematical disciplines. Unlike standard benchmarks, MathVision mandates that models interpret complex visual structures—such as functional plots and geometric figures—to derive a solution, effectively testing the model's proficiency in visual-to-math translation.
\end{itemize}

\section{Limitation}
\label{app:limitation}

While our approach demonstrates substantial empirical gains, the adaptive routing mechanism introduces mild variability in inference latency, which may necessitate careful scheduling for real-time applications with strict timing constraints. Moreover, we adopt a data-driven adaptive strategy to learn the router for token difficulty assessment, eliminating the need for supervisory feedback from external expert models. Our theoretical analysis suggests that routing accuracy plays an important role in the generalization performance of latent reasoning.


\end{document}